%% file: main.tex
\documentclass[10pt,twocolumn,letterpaper]{article}

\usepackage{cvpr}              
\usepackage{multirow}
\usepackage{colortbl}
\usepackage{pifont}
\usepackage{algorithm}
\usepackage{algpseudocode}
\usepackage{amsthm}
\input{preamble}
\definecolor{cvprblue}{rgb}{0.21,0.49,0.74}
\usepackage[pagebackref,breaklinks,colorlinks,allcolors=cvprblue]{hyperref}

\def\paperID{9402} 
\def\confName{CVPR}
\def\confYear{2026}

\title{Model Merging in the Essential Subspace}

\author{
Longhua Li$^{1,2}$ \quad Lei Qi$^{1,2}$\thanks{Corresponding authors.} \quad Qi Tian$^{3}$ \quad Xin Geng$^{1,2}$\footnotemark[1] \\
$^1$School of Computer Science and Engineering, Southeast University, Nanjing, China\\
$^2$Key Laboratory of New Generation Artificial Intelligence Technology and Its Interdisciplinary\\Applications (Southeast University), Ministry of Education, China\\
$^3$Huawei Technologies, Shanghai, China\\
{\tt\small lhli@seu.edu.cn, qilei@seu.edu.cn, tian.qi1@huawei.com, xgeng@seu.edu.cn}
}

\begin{document}
\maketitle
\input{sec/0_abstract}    
\input{sec/1_intro}

\input{sec/2_related_work}

\input{sec/3_method}

\input{sec/4_experiments}
\input{sec/5_conclusion}
{
    \small
    \bibliographystyle{./ieeenat_fullname}
    \bibliography{./main}
}

\input{sec/X_suppl}

\end{document}

%% file: sec/0_abstract.tex
\begin{abstract}
Model merging aims to integrate multiple task-specific fine-tuned models derived from a shared pre-trained checkpoint into a single multi-task model without additional training. Despite extensive research, task interference remains a major obstacle that often undermines the performance of merged models.
In this paper, we propose ESM (\textbf{E}ssential \textbf{S}ubspace \textbf{M}erging) , a robust framework for effective model merging.
We begin by performing Principal Component Analysis (PCA) on feature shifts induced by parameter updates. The resulting principal directions span an essential subspace that dominantly influences feature representations. Each task's parameter update matrix is projected onto its respective essential subspace for low-rank decomposition before merging. This methodology mitigates inter-task interference while preserving core task-specific functionality.
Furthermore, we introduce a multi-level polarized scaling strategy that amplifies parameters containing critical knowledge and suppresses redundant ones, preventing essential knowledge from being overwhelmed during fusion.
Extensive experiments across multiple task sets and model scales demonstrate that our method achieves state-of-the-art performance in multi-task model merging.
Code is available at \url{https://github.com/kiddo127/ESM}.
\end{abstract}



%% file: sec/1_intro.tex
\section{Introduction}
\label{sec:intro}

In recent years, the pre-training–fine-tuning paradigm has revolutionized performance across numerous downstream tasks, enabling models to excel by adapting to task-specific data. This success has spurred the creation of many specialized expert models. The growth of public model repositories is now fueling demand for techniques that reuse and integrate these models. Among them, model merging \cite{wortsman2022model,ilharcoediting,yan2025calm,sun2025towards} has emerged as a promising, parameter-efficient approach to combine multiple experts into a single versatile model, thus marking a pivotal advance toward the development of general-purpose artificial intelligence.

However, integrating the capabilities of multiple models remains challenging. A key difficulty is inter-task interference, as fine-tuning on different tasks often drives parameters in conflicting directions. Simple averaging methods such as Model Soup \cite{wortsman2022model} mix these updates, diluting task-specific knowledge and leading to suboptimal performance.
To alleviate this problem, subsequent studies analyzed task vectors, defined as the parameter differences between fine-tuned and pre-trained models \cite{ilharcoediting,yadav2023ties,matena2022merging,jindataless}. More recently, advanced methods have employed Singular Value Decomposition (SVD) on task vectors to identify low-rank subspaces that compactly represent task knowledge and reduce redundancy during merging \cite{stoicamodel,gargiulo2025task,marczakno,zhang2026dc}. However, these SVD-based subspaces are not well aligned with task feature spaces and exhibit limited knowledge concentration.


In this paper, we first theoretically analyze the limitations of SVD-based low-rank decomposition for task-specific parameter updates. Although these methods truncate the smallest singular values, they overlook the feature distribution. Consequently, when input tokens are aligned with the truncated singular vectors, significant truncation errors may still occur, as illustrated in Equation \ref{eq:svd_loss}.
Moreover, when merging a large number of task-specific update parameters, the core knowledge of each task may be overwhelmed. As shown in Figure \ref{fig:single_task_sequential_layer_load} and \ref{fig:sequential_layer_merge}, we observe that the magnitude (norm) of task parameter updates is closely correlated with their directional importance: large-magnitude updates often correspond to critical adjustments essential for capturing task-specific or shared knowledge across tasks, whereas small-magnitude updates, though potentially negligible for individual task performance, may obscure important task knowledge after model merging.

Building on these insights, our method consists of two components. Instead of directly decomposing the task matrix (i.e., the parameter update matrix), we first perform Principal Component Analysis (PCA) on the proxy activation shifts induced by the task matrix. The matrix is then decomposed along the resulting principal directions, forming the task’s Essential Subspace, which is directly connected to the output feature space and ensures minimal feature loss compared to other low-rank subspaces of the same rank. Each task matrix is subsequently projected onto its Essential Subspace before fusion. Experiments demonstrate that this subspace aligns more closely with the task’s feature representation, exhibits stronger energy concentration, and isolates task-specific knowledge more effectively than conventional SVD-based subspaces.
Next, we introduce a simple yet effective Polarized Scaling mechanism. It adaptively rescales task matrices at three levels: across layers, tasks, and dimensions, amplifying high-norm (strong-signal) components while suppressing low-norm (weak-noise) ones. This polarization effectively reduces inter-task interference during model fusion and ensures the preservation of essential task knowledge in the merged model.

Our main contributions are summarized as follows:
\begin{itemize}
    \item We propose a decomposition method that builds an essential subspace aware of activation shift distributions, aligning closely with feature distribution and concentrating energy more effectively, thereby better preserving task-specific knowledge and reducing cross-task interference.
    \item We show that the pre-fusion and post-fusion task matrix norms reflect gradient confidence and inter-task consensus, respectively. Based on this, we introduce a multi-level polarized scaling mechanism that prevents crucial parameters from being overshadowed during merging.
    \item Extensive experiments show that our method outperforms existing model merging methods and effectively narrows the gap between the merged model and the expert models.
\end{itemize}

%% file: sec/2_related_work.tex
\section{Related Work}
\label{sec:related_work}

\subsection{Model Merging}
Model merging aims to combine multiple task-specific fine-tuned models into a unified multi-task model without retraining.
To ensure merging stability, recent studies typically merge models that are fine-tuned from the same pre-trained checkpoint.
Model Soup \cite{wortsman2022model} averages fine-tuned weights to improve generalization, while Task Arithmetic \cite{ilharcoediting} introduces task vectors, defined as the parameter differences between fine-tuned and pre-trained models, to enable vector-based knowledge composition. However, direct averaging of task vectors often causes severe task interference due to conflicting updates. To address this, TIES-Merging \cite{yadav2023ties} trims redundant parameters before averaging salient ones, AdaMerging \cite{yangadamerging} learns adaptive task-wise coefficients, and DARE \cite{yu2024language} resets redundant updates while rescaling the rest. Information-weighted methods such as Fisher Merging \cite{matena2022merging} and RegMean \cite{jindataless} use Fisher information or input similarity for weighted averaging. Other works refine merging through parameter- or layer-wise strategies \cite{du2024parameter,zhang2024knowledge,wanglines}, or leverage implicit or modular representations to enhance flexibility \cite{chengwhoever,huang2024emr,zheng2025free}.

Recent advances move beyond raw parameter space to the spectral domain. TSV-M \cite{gargiulo2025task} perform Singular Value Decomposition (SVD) on task matrices and merge along the top singular directions that capture dominant functional subspaces. Iso-CTS \cite{marczakno} constructs an isotropic common subspace through singular value normalization followed by task-specific refinements, achieving state-of-the-art performance. However, singular values reflect only the parameter energy rather than their functional impact.
To overcome this limitation, we decompose each task matrix within an essential subspace derived from its effect on output activations, which better captures task-specific features. This leads to lower truncation error and improved preservation of task knowledge during merging. Moreover, we introduce a polarized scaling mechanism that amplifies strong signals and suppresses noise, further enhancing fusion performance.

\subsection{Model Weight Transformation}
Transforming model weights has been extensively studied in areas such as model compression \cite{wangsvd,li2025stratified,huang2026few} and efficient knowledge transfer \cite{xia2024transformer,xia2024exploring,li2025one,xu2025learngene}. One of the most popular approaches to model weight transformation is based on the low-rank assumption of weight matrices.
The LoRA family of methods \cite{hulora,dettmers2023qlora,ding2023parameter,WOS001539022300003} assumes that fine-tuning updates are inherently low-rank and learns compact matrices to parameterize these updates.
Other methods use SVD-based decompositions for parameter-efficient fine-tuning \cite{sun2022singular,han2023svdiff} or model compression \cite{li2023losparse,saha2023matrix,wangsvd,wang2025svd}. More recently, low-rank decompositions have been applied to model merging \cite{stoicamodel,gargiulo2025task,marczakno}, combining task updates in reduced subspaces to mitigate inter-task interference.

In contrast, our proposed Essential Subspace Decomposition (ESD) constructs the decomposition space not from the weight updates themselves but from the activation shifts induced by these updates. By capturing task-specific principal directions in the activation space, ESD produces sparse yet expressive task representations, reducing cross-task interference while preserving high task fidelity.

%% file: sec/3_method.tex
\begin{figure*}[t]
    \centering
    \includegraphics[width=\linewidth]{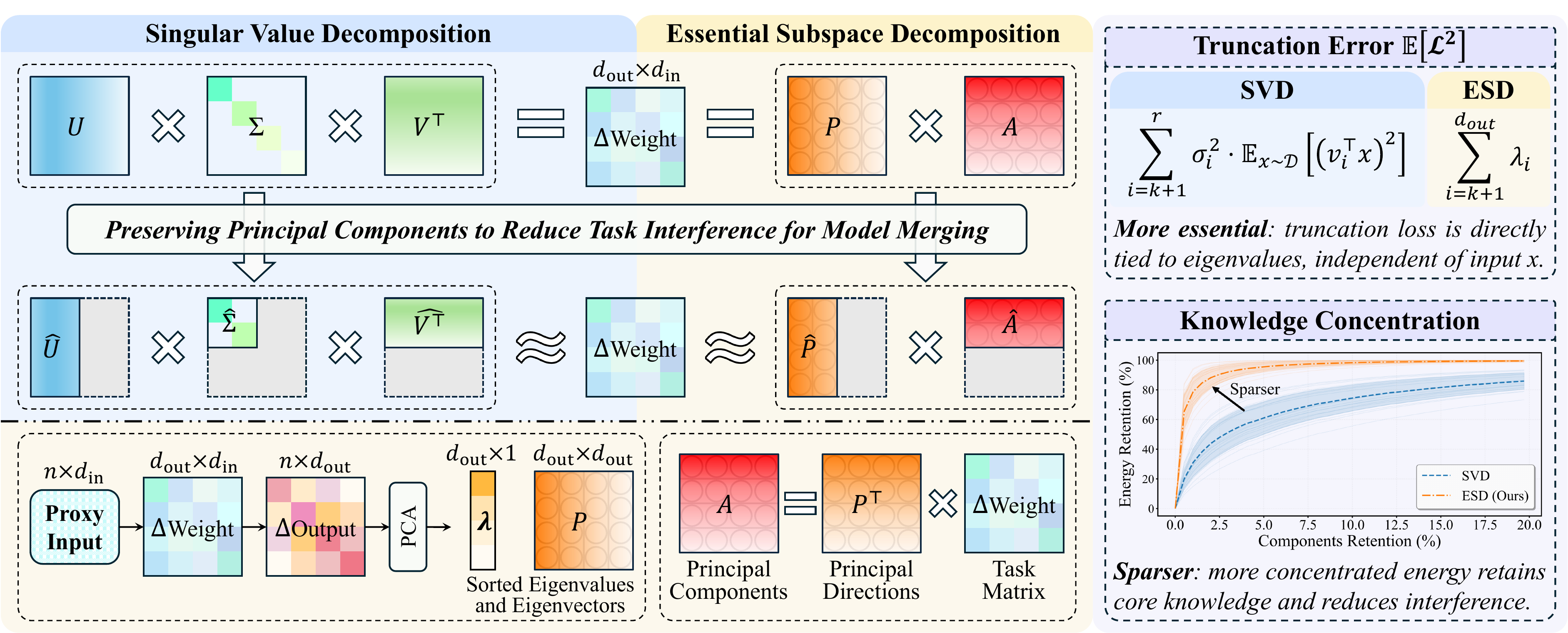}
    \caption{Essential Subspace Decomposition (ESD) versus Singular Value Decomposition (SVD). Unlike SVD, which decomposes the task matrix solely based on weights, ESD decomposes them based on feature shift distributions. When truncating components for merging, ESD's expected truncation error is directly related to the magnitude of the discarded eigenvalues and yields higher knowledge retention.}
    \label{fig:ESD_vs_SVD}
\end{figure*}

\section{Methodology}

\subsection{Preliminaries on Model Merging}
Model merging aims to integrate a collection of task-specific models, each fine-tuned from a common pre-trained checkpoint, into a single unified model without additional retraining. Formally, let $\theta_0$ denote the parameters of the pre-trained model, and $\theta_t$ be the parameters of the expert model fine-tuned on task $t$, where $t=1,\ldots,T$. The fundamental object of interest in model merging is the task update, which captures how fine-tuning shifts the model away from the pre-trained weights.
Following Task Arithmetic \cite{ilharcoediting}, the task vector for task $t$ is defined as:
\begin{equation}
\tau_{t}=\operatorname{Flatten}(\theta_{t}-\theta_{0}).
\end{equation}
Given the structured nature of models, it is preferable to retain the matrix form of the update rather than flattening it into a vector. The task matrix of layer $\ell$ is defined as:
\begin{equation}
\Delta_{W_t}^{ \left( \ell \right)}= \theta_{t}^{ \left( \ell \right)}- \theta_{0}^{ \left( \ell \right)}.
\end{equation}
Each $\Delta_{W_t}^{(\ell)}$ represents the task-specific parameter update at layer $\ell$, preserving the row–column structure essential for spectral analysis and subspace alignment. The goal of model merging is to construct merged weights $\theta_M$ that support all tasks, typically in the form:
\begin{equation}
\theta_{M}= \theta_{0}+f \left( \Delta_{W_1}, \cdots, \Delta_{W_T} \right),
\end{equation}
where $f \left( \cdot \right)$ is a merging function, which is the main focus of current model merging research \cite{ilharcoediting, stoicamodel,gargiulo2025task,marczakno}.

\subsection{Model Merging in Essential Subspace}
Unlike previous methods that merge models in the truncated singular vector subspace obtained via SVD, we propose to decompose and merge task matrices within a more essential subspace that is better aligned with the task's output feature space, as illustrated in Figure \ref{fig:ESD_vs_SVD}.
For simplicity, unless otherwise specified, we omit the layer index 
$\ell$ and task identifier $t$ in the task matrix and denote it simply as $\Delta_W$.

\paragraph{The Limitation of Feature Distribution-Agnostic SVD.}
Recent model merging methods often apply SVD to the task matrix $\Delta_{W} \in \mathbb{R}^{d_{\text{out}} \times d_{\text{in}}}$, keeping only the top-$k$ singular components to retain essential knowledge and reduce task interference, yielding the truncated approximation $\widehat{\Delta_{W}}$.
However, this parameter-centric view ignores the input feature distribution. While it minimizes the Frobenius norm reconstruction error of $\Delta_{W}$, it provides no guarantee about preserving the functional output. For an input $x\sim\mathcal{D}$, the expected output error after discarding the smallest $r-k$ singular components is (see Appendix \ref{sec:Proof_SVD_Truncation_Error} for derivation):
\begin{equation}\label{eq:svd_loss}
\mathbb{E}_{x\sim\mathcal{D}} \left[ \|\Delta_{W}x - \widehat{\Delta_{W}}x \|_2^2 \right] = \sum_{i=k+1}^r\sigma_{i}^{2} \cdot \mathbb{E}_{x\sim\mathcal{D}} \left[ (v_{i}^{\top}x)^2 \right],
\end{equation}
where $r$ denotes the number of non-zero singular values. As shown, the error depends not only on the discarded singular values $\sigma_i$, but also on the alignment between the input distribution and the right singular vectors $v_i$. A direction with small $\sigma_i$ may be functionally critical if inputs project strongly onto $v_i$. By ignoring the input distribution, SVD risks discarding functionally essential information.

\paragraph{Essential Subspace Decomposition (ESD).}
To address this limitation, we introduce Essential Subspace Decomposition (ESD), an input distribution-aware method that directly connects task matrix decomposition to its functional influence. Instead of relying on the parameter energy of the task matrix, ESD constructs a basis from the principal directions of activation shifts induced by the task matrix.
For each task $t$, we sample a lightweight proxy dataset (32 unlabeled examples in practice). By performing a forward pass through the task-specific fine-tuned model and recording the intermediate activations, we obtain the activation shift. Specifically, given $n$ input tokens of dimension $d_{\text{in}}$, forming $X_{\text{proxy}} \in \mathbb{R}^{n \times d_{\text{in}}}$, the shift is computed as:
\begin{equation}\label{eq:activation_shift}
\Delta_{O} = X_{\text{proxy}} \Delta_{W}^{\top} \in \mathbb{R}^{n \times d_{\text{out}}},
\end{equation}
which captures the functional footprint of $\Delta_{W}$ on a representative set of inputs. By performing PCA on $\Delta_{O}$, we obtain a set of eigenvectors $\boldsymbol{p}_i$ and their corresponding eigenvalues $\lambda_i$, sorted by explained variance. These eigenvectors form an orthonormal basis $P = [\boldsymbol{p}_1, \boldsymbol{p}_2, \dots, \boldsymbol{p}_{d_{\text{out}}}] \in \mathbb{R}^{d_{\text{out}} \times d_{\text{out}}}$ for the output space that is inherently aligned with the task matrix's functional behavior.
The original task matrix $\Delta_W$ is projected onto the space defined by $P$, which yields the coordinate matrix $A = P^{\top} \Delta_{W}\in\mathbb{R}^{d_{\text{out}}\times d_{\text{in}}}$, where $A$ contains the coordinates of $\Delta_W$ with respect to the basis $P$. Thus, $\Delta_W$ can be factorized as:
\begin{equation}
\Delta_{W} = PA = P(P^{\top} \Delta_{W}).
\end{equation}
We truncate to the top-$k$ principal components to form the essential basis $\hat{P}=[p_1, \dots, p_k] \in \mathbb{R}^{d_{\text{out}} \times k}$. The corresponding coordinate matrix is $\hat{A} = \hat{P}^{\top} \Delta_{W}\in\mathbb{R}^{k\times d_{\text{in}}}$, leading to the low-rank approximation:
\begin{equation}
\widehat{\Delta_{W}} = \hat{P} \hat{A} = \hat{P} (\hat{P}^{\top} \Delta_{W}).
\end{equation}
We can theoretically show (see Appendix \ref{sec:Proof_ESD_Truncation_Error} for derivation) that the expected output error under this decomposition is now decoupled from the input direction:
\begin{equation}\label{eq:esd_loss}
\mathbb{E}_{x\sim\mathcal{D}} \left[ \|\Delta_{W}x - \widehat{\Delta_{W}}x \|_2^2 \right] = \sum_{i=k+1}^{d_{\text{out}}} \lambda_i.
\end{equation}
Unlike SVD (Equation \ref{eq:svd_loss}), the ESD truncation error (Equation ~\ref{eq:esd_loss}) depends only on the sum of discarded eigenvalues. Removing small-eigenvalue components thus discards the least functionally relevant directions, making ESD truly ``essential'' by optimally preserving task expressiveness and enabling sparser, higher-fidelity model merging.

\paragraph{Essential Subspace Merging (ESM).}
Building upon the proposed ESD, we propose Essential Subspace Merging (ESM), a strategy that robustly aggregates task matrices by only considering the functionally important components. This method is analogous to TSV-M \cite{gargiulo2025task}, but is adapted to leverage our distribution-aware low-rank factors.
The ESM process follows three sequential steps:
\begin{enumerate}
    \item \textbf{Factorization and Truncation.} For each task $t \in \{1, \dots, T\}$, we first factorize the task matrix $\Delta_{W_t}$ into its essential basis $P_t$ and coordinate matrix $A_t$, such that $\Delta_{W_t} = P_t A_t$, where $P_t \in \mathbb{R}^{d_{\text{out}} \times d_{\text{out}}}$ and $A_t \in \mathbb{R}^{d_{\text{out}} \times d_{\text{in}}}$. Given $T$ tasks, we allocate a rank budget $k = \lfloor d_{\text{out}} / T \rfloor$ to each one. We truncate the task-specific factors to their top-$k$ components, resulting in the sparse factors $\hat{P}_t \in \mathbb{R}^{d_{\text{out}} \times k}$ and $\hat{A}_t \in \mathbb{R}^{k \times d_{\text{in}}}$.

    \item \textbf{Concatenation.} Next, we form the merged basis and coordinate matrices by horizontally and vertically concatenating the truncated factors across all tasks, respectively:
    \begin{equation}
    P_{\text{cat}} = [\hat{P}_1 | \hat{P}_2 | \dots | \hat{P}_T] \in \mathbb{R}^{d_{\text{out}} \times (k \cdot T)},
    \end{equation}
    \begin{equation}\label{eq:A_cat}
    A_{\text{cat}} = 
    \begin{bmatrix} 
    \hat{A}_1 \\ 
    \hat{A}_2 \\ 
    \vdots \\ 
    \hat{A}_T 
    \end{bmatrix} 
    \in \mathbb{R}^{(k \cdot T) \times d_{\text{in}}}.
    \end{equation}
    The initial merged matrix is simply the product $P_{\text{cat}} A_{\text{cat}}$.

    \item \textbf{Orthogonalization to Minimize Interference.} The concatenated factors $P_{\text{cat}}$ and $A_{\text{cat}}$ consist of components derived from different task subspaces, which may not be mutually orthogonal, leading to significant interference in the merged model. To reconstruct the merged matrix with minimized correlation, we must orthogonalize these concatenated factors. Following TSV-M \cite{gargiulo2025task}, we first compute the SVDs for each concatenated matrix:
    \begin{equation}
    P_{\text{cat}} = U_P \Sigma_P V_P^{\top}, \ 
    A_{\text{cat}} = U_A \Sigma_A V_A^{\top}.
    \end{equation}
    To ensure that more important parameter directions are preferentially preserved during orthogonalization, we apply eigenvalue-based weighting to both the directional vectors of $P_{\text{cat}}$ and the coordinate vectors of $A_{\text{cat}}$ prior to performing the above SVD.
    We then perform a whitening operation on the matrices by retaining only the orthogonal components $U$ and $V$. This transformation effectively rotating the basis to be maximally decorrelated:
    \begin{equation}
    \widetilde{P}_{\text{cat}} = U_P V_P^{\top}, \ 
    \widetilde{A}_{\text{cat}} = U_A V_A^{\top}.
    \end{equation}
    The final merged task matrix $\Delta_{W_{\text{merged}}}$ is then constructed by multiplying these orthogonalized factors:
    \begin{equation}
    \Delta_{W_{\text{merged}}} = \widetilde{P}_{\text{cat}} \widetilde{A}_{\text{cat}}.
    \end{equation}
\end{enumerate}

\subsection{Polarized Scaling for Knowledge Enhancement}
\label{sec:polarized_scaling}
Although ESD preserves task matrix functionality, intense weight competition during fusion can still overwhelm critical knowledge. We note that the norm of a weight block update reflects its directional confidence and inter-task consensus, serving as an effective scaling factor. We therefore introduce Polarized Scaling (Figure \ref{fig:Polarized_Scaling}) to amplify signals and suppress noise for improved merging performance.

\begin{figure}[t]
    \centering

    \begin{subfigure}{\linewidth}
        \centering
        \includegraphics[width=\linewidth]{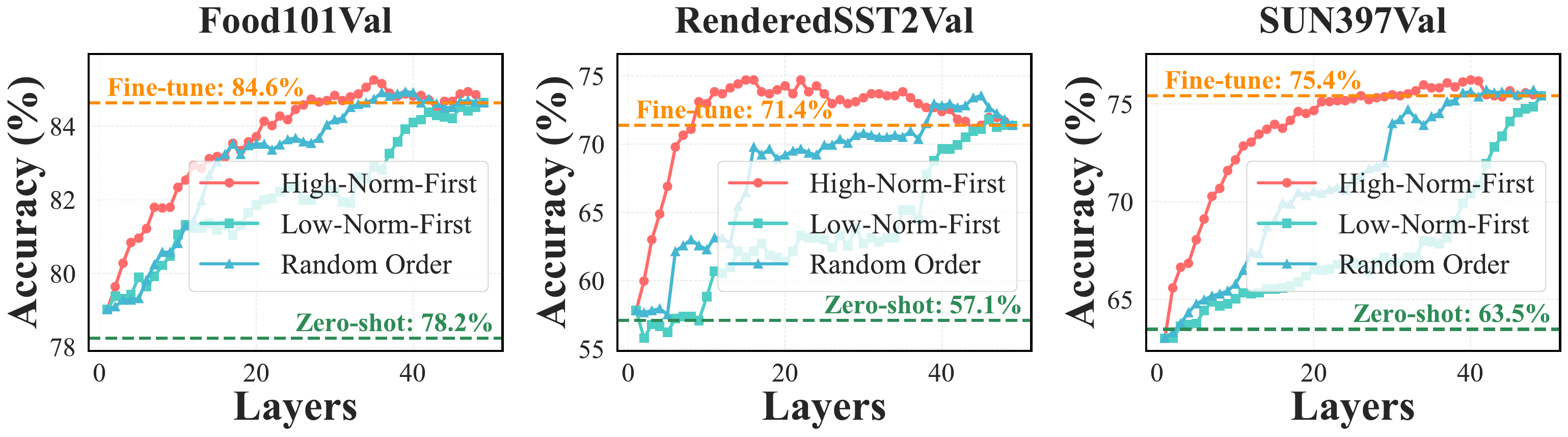}
        \caption{Sequential Task Matrices Loading}
        \label{fig:sequential_layer_load}
    \end{subfigure}
    

    \begin{subfigure}{\linewidth}
        \centering
        \includegraphics[width=\linewidth]{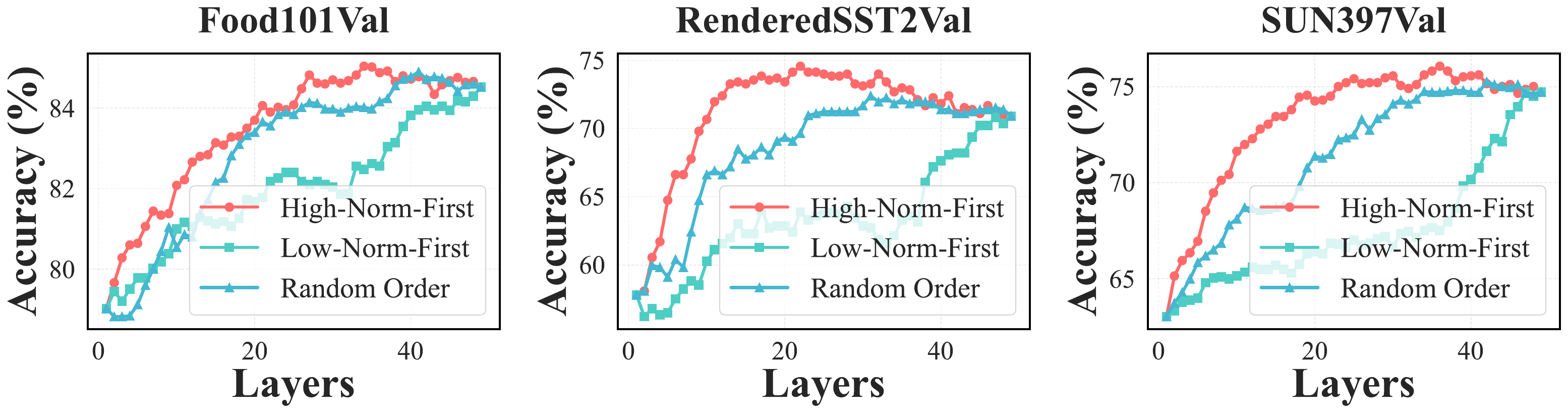}
        \caption{Sequential \textit{Averaged-Norm} Task Matrices Loading}
        \label{fig:sequential_mean_layer_load}
    \end{subfigure}
    \caption{Performance evaluation of layer-wise task matrix loading under different ordering strategies on a pre-trained ViT-B/32 backbone. (a) Direct loading of task matrices. (b) Loading with layer-wise norms averaged to reflect directional importance.}
    \label{fig:single_task_sequential_layer_load}
\end{figure}

\begin{figure}[t]
    \centering
    \begin{subfigure}{0.32\linewidth}
        \centering
        \includegraphics[width=\linewidth]{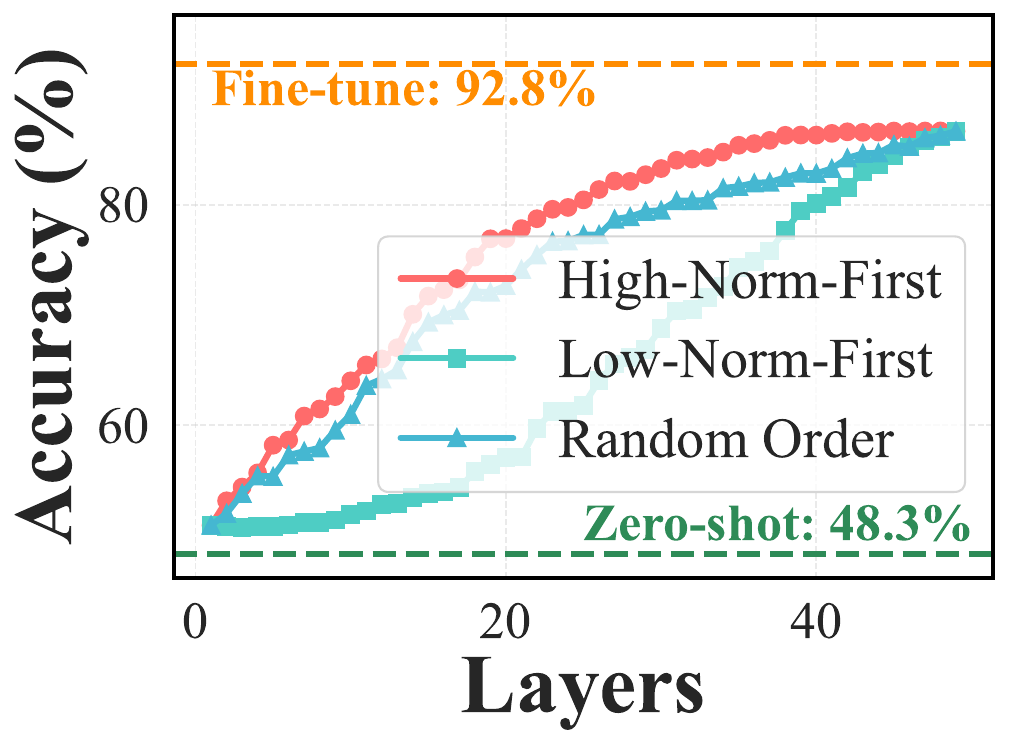}
        \caption{ViT-B/32}
    \end{subfigure}
    \hfill
    \begin{subfigure}{0.32\linewidth}
        \centering
        \includegraphics[width=\linewidth]{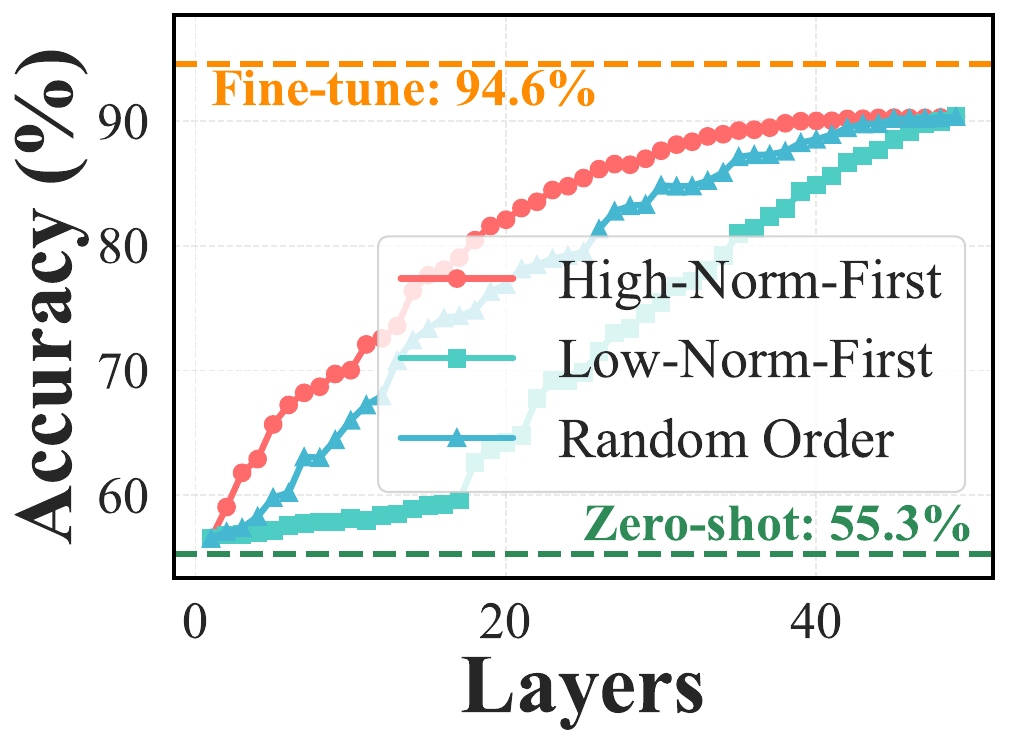}
        \caption{ViT-B/16}
    \end{subfigure}
    \hfill
    \begin{subfigure}{0.32\linewidth}
        \centering
        \includegraphics[width=\linewidth]{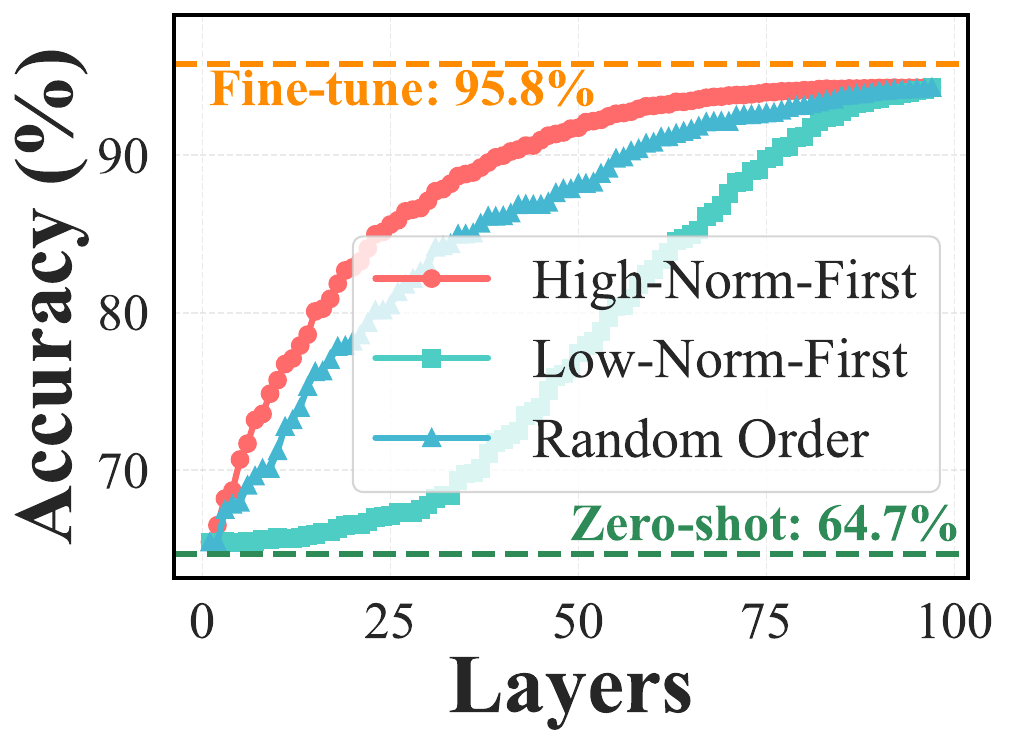}
        \caption{ViT-L/14}
    \end{subfigure}
    \caption{Performance evaluation on the 8-task benchmark when loading merged task matrices layer-by-layer into a pre-trained backbone under different ordering strategies.}
    \label{fig:sequential_layer_merge}
\end{figure}

\paragraph{Empirical Evidence: Single-Task Merging.}
As shown in Figure \ref{fig:sequential_layer_load}, we sequentially add task matrices from a fine-tuned model to the zero-shot base model, comparing three orders: high-norm-first, low-norm-first, and random. The results consistently indicate that adding high-norm matrices first yields the best performance. Moreover, incorporating low-norm matrices in later stages leads to clear performance degradation.
To isolate the effect of direction rather than scale, we repeat the experiment after normalizing each layer’s task matrix to the same average norm (Figure \ref{fig:sequential_mean_layer_load}). The high-norm-first order still performs best, revealing a key insight: \textit{high-norm task matrices matter because they capture consistent, task-aligned directions in the optimization landscape, encoding the task’s essential knowledge}.

\paragraph{Empirical Evidence: Multi-Task Merging.}
As shown in Figure \ref{fig:sequential_layer_merge}, we load the aggregated task matrices, obtained after Essential Subspace Merging (ESM), into the pre-trained model, again following the three loading orders based on the merged matrix norm. The results confirm that prioritizing high-norm components yields superior merged performance. This suggests that \textit{the high-norm components in the merged matrix are likely the inter-task consensus, representing shared, important knowledge directions}.

\begin{figure}[t]
    \centering

    \begin{subfigure}{\linewidth}
        \centering
        \includegraphics[width=\linewidth]{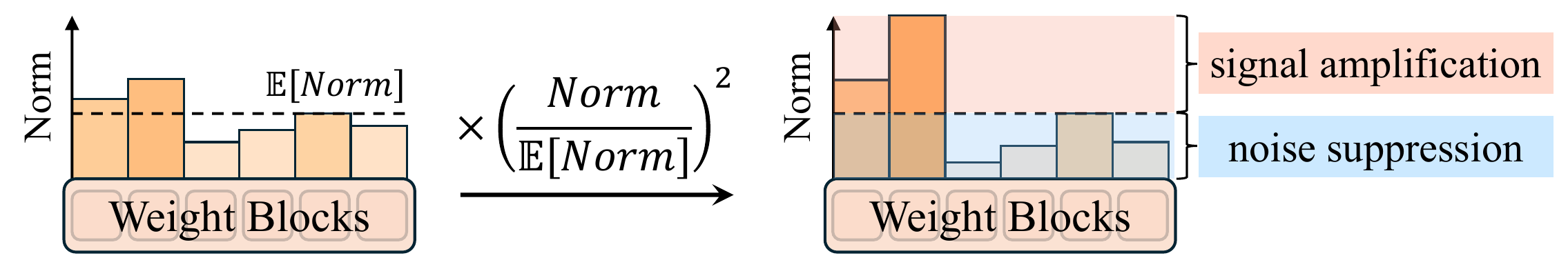}
        \caption{Polarized Scaling}
        \label{fig:Scaling_Method}
    \end{subfigure}
    

    \begin{subfigure}{\linewidth}
        \centering
        \includegraphics[width=\linewidth]{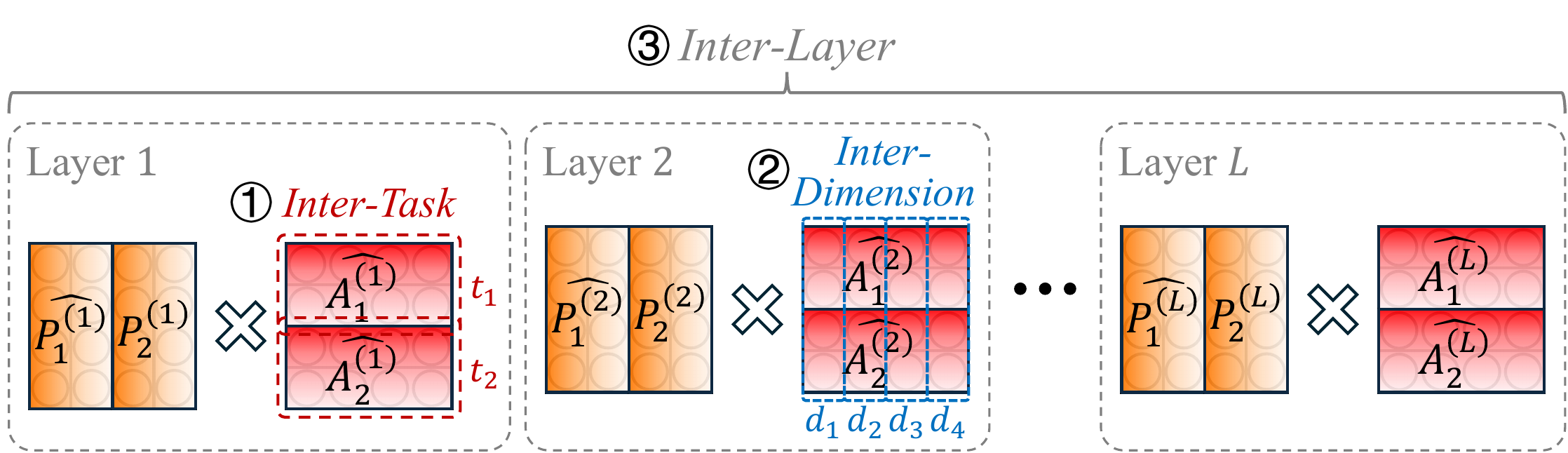}
        \caption{Scaling Hierarchy}
        \label{fig:Scaling_Perspectives}
    \end{subfigure}

    \caption{(a) The proposed Polarized Scaling uses the norm of parameter updates as scaling factors to amplify essential parameters submerged by redundant ones. (b) This scaling is applied at three distinct levels: across tasks, dimensions, and layers.}
    \label{fig:Polarized_Scaling}
\end{figure}

\paragraph{Proposed Polarized Scaling (PS).}
\begin{table*}[t]
\caption{Average absolute accuracy on model merging benchmarks, with normalized average accuracy shown as subscripts in parentheses. ``\textit{Zero-shot}'' (pre-trained model) and ``\textit{Fine-tuned}'' (fine-tuned models) results are presented as the lower and upper bounds, respectively.}
\label{tab:main_results}
\setlength{\tabcolsep}{2.9pt}
\begin{tabular}{lccccccccc}
\toprule[1.5pt] 
\multirow{2}{*}{Method} & \multicolumn{3}{c}{ViT-B/32} & \multicolumn{3}{c}{ViT-B/16} & \multicolumn{3}{c}{ViT-L/14}
\\
\cmidrule[0.5pt](lr){2-4}\cmidrule[0.5pt](lr){5-7}\cmidrule[0.5pt](lr){8-10} & 8 tasks & 14 tasks & 20 tasks & 8 tasks & 14 tasks & 20 tasks & 8 tasks & 14 tasks & 20 tasks
\\
\midrule[0.5pt] 

\rowcolor[HTML]{ECECEC}\textit{Zero-shot} & 48.3 & 57.2 & 56.1 & 55.3 & 61.3 & 59.7 & 64.7 & 68.2 & 65.2 \\
\rowcolor[HTML]{ECECEC}\textit{Fine-tuned} & 92.8 & 90.9 & 91.3 & 94.6 & 92.8 & 93.2 & 95.8 & 94.3 & 94.7 \\

\midrule[0.5pt] 

Weight Averaging \cite{wortsman2022model} & 66.3$_{(72.1)}$ & 64.3$_{(71.1)}$ & 61.0$_{(67.5)}$ & 72.2$_{(76.6)}$ & 69.5$_{(74.8)}$ & 65.3$_{(70.4)}$ & 79.6$_{(83.2)}$ & 76.7$_{(81.1)}$ & 71.6$_{(75.6)}$ \\
Task Arithmetic \cite{ilharcoediting} & 70.8$_{(76.5)}$ & 65.3$_{(72.1)}$ & 60.5$_{(66.8)}$ & 75.4$_{(79.6)}$ & 70.5$_{(75.9)}$ & 65.8$_{(70.8)}$ & 84.9$_{(88.7)}$ & 79.4$_{(84.0)}$ & 74.0$_{(78.1)}$ \\
TIES-Merging \cite{yadav2023ties} & 75.1$_{(81.0)}$ & 68.0$_{(74.8)}$ & 63.4$_{(69.9)}$ & 79.7$_{(84.3)}$ & 73.2$_{(78.7)}$ & 68.2$_{(73.3)}$ & 86.9$_{(90.7)}$ & 79.5$_{(84.1)}$ & 75.7$_{(79.8)}$ \\
Consensus TA \cite{wanglocalizing} & 75.0$_{(80.8)}$ & 70.4$_{(77.4)}$ & 65.4$_{(72.0)}$ & 79.4$_{(83.9)}$ & 74.4$_{(79.9)}$ & 69.8$_{(74.9)}$ & 86.3$_{(90.1)}$ & 82.2$_{(86.9)}$ & 79.0$_{(83.2)}$ \\
TSV-M \cite{gargiulo2025task} & 85.9$_{(92.3)}$ & 80.1$_{(87.9)}$ & 77.1$_{(84.3)}$ & 89.0$_{(93.9)}$ & 84.6$_{(91.0)}$ & 80.6$_{(86.5)}$ & 93.0$_{(97.0)}$ & 89.2$_{(94.4)}$ & 87.7$_{(92.5)}$ \\
Iso-C \cite{marczakno} & 86.3$_{(92.9)}$ & 80.3$_{(88.1)}$ & 75.5$_{(82.5)}$ & 90.6$_{(95.6)}$ & 84.8$_{(91.1)}$ & 79.6$_{(85.4)}$ & 94.2$_{(98.3)}$ & 89.3$_{(94.5)}$ & 87.6$_{(92.2)}$ \\
Iso-CTS \cite{marczakno} & 86.2$_{(92.8)}$ & 81.7$_{(89.7)}$ & 78.1$_{(85.5)}$ & 91.1$_{(96.1)}$ & 86.4$_{(92.8)}$ & 82.4$_{(88.4)}$ & 94.7$_{(98.8)}$ & 91.0$_{(96.3)}$ & 90.1$_{(94.9)}$ \\

\rowcolor[HTML]{C6E2FF}ESM (Ours) & \textbf{88.4}$_{(\textbf{95.3})}$ & \textbf{83.7}$_{(\textbf{92.0})}$ & \textbf{81.3}$_{(\textbf{88.9})}$ & \textbf{91.8}$_{(\textbf{97.0})}$ & \textbf{87.4}$_{(\textbf{94.1})}$ & \textbf{84.9}$_{(\textbf{91.1})}$ & \textbf{94.8}$_{(\textbf{98.9})}$ & \textbf{91.3}$_{(\textbf{96.8})}$ & \textbf{90.4}$_{(\textbf{95.3})}$ \\

\bottomrule[1.5pt] 
\end{tabular}
\end{table*}

Based on these observations, we introduce a simple yet effective Polarized Scaling (PS) scheme designed to proactively suppress the influence of noisy, low-confidence parameters and enhance the strength of essential, high-confidence ones. The core idea is to scale a weight block by the square of its relative magnitude. We apply this scaling at three hierarchical levels:
\begin{enumerate}
    \item \textbf{Inter-Task Scaling.}
    Within the same layer $\ell$, we scale each individual truncated coordinate matrix $\hat{A}_{t}^{(\ell)}$ relative to the average norm across all tasks $T$. This prevents important task signals from being masked by the accumulation of noise from numerous competing tasks.
    The scaling coefficient $s_{t}^{(\ell)}$ for task $t$ at layer $\ell$ is defined as:
    \begin{equation}
    s_{t}^{(\ell)} = \left(\frac{\left| \hat{A}_t^{(\ell)} \right|_F}{\mathbb{E}_{i \in \{1, \dots, T\}}\left[ \left| \hat{A}_{i}^{(\ell)} \right|_F \right]}\right)^2.
    \end{equation}
    The scaled coordinate matrix, $s_{t}^{(\ell)} \cdot \hat{A}_{t}^{(\ell)}$, is used in place of the original $\hat{A}_{t}^{(\ell)}$ to construct $A_{\text{cat}}^{(\ell)}$ in Equation \ref{eq:A_cat}.
    
    \item \textbf{Inter-Dimension Scaling.}
    After task-wise scaling and concatenation, we further scale the columns of the coordinate matrix $A_{\text{cat}}^{(\ell)}$. This highlights the specific dimensions that exhibit strong consensus across all tasks.
    Let $\mathbf{a}_{j}$ be the $j$-th column vector of $A_{\text{cat}}^{(\ell)}$. The scaling coefficient $c_{j}^{(\ell)}$ for column $j$ is calculated as follows:
    \begin{equation}
    c_{j}^{(\ell)} = \left(\frac{\left| \mathbf{a}_{j}^{(\ell)} \right|_2}{\mathbb{E}_{i \in \{1, \dots, d_{\text{in}}\}}\left[ \left| \mathbf{a}_{i}^{(\ell)} \right|_2 \right]}\right)^2.
    \end{equation}
    After applying weighting to each column of $A_{\text{cat}}^{(\ell)}$, the representation becomes more focused on the input dimensions that are effective for every task.
    
    \item \textbf{Inter-Layer Scaling.}
    Finally, we apply scaling across the different layers $\ell \in \mathcal{L}$ of the merged task matrix $\Delta^{(\ell)}_{W_{\text{merged}}}$. This is crucial because residual connections introduce inter-layer competition, where less important layers can mask vital information from important layers.
    We only compare layers of the same nature (e.g., all attention QKV projection layers, or all MLP up-projection layers) as comparing norms across different layer types is meaningless. Let $\mathcal{L}_{\text{type}}$ be the set of layers of the same nature. The scaling factor $\beta_{\ell}$ for layer $\ell \in \mathcal{L}_{\text{type}}$ is:
    \begin{equation}
    \beta_{\ell} = \left(\frac{\left| \Delta_{W_{\text{merged}}}^{(\ell)} \right|_F}{\mathbb{E}_{i \in \mathcal{L}_{\text{type}}}\left[ \left| \Delta_{W_{\text{merged}}}^{(i)} \right|_F \right]}\right)^2.
    \end{equation}
    The scaled merged task matrix is $\beta_{\ell} \cdot \Delta_{W_{\text{merged}}}^{(\ell)}$. This scaling ensures that layers exhibiting stronger directional consensus (higher norm) retain their relative importance.
\end{enumerate}

The parameter for the $\ell$-th layer of the final merged multi-task model is given by:
\begin{equation}
\theta_M^{(\ell)}=\theta_0^{(\ell)}+\alpha \cdot \beta_{\ell} \cdot \Delta_{W_{\text{merged}}}^{(\ell)},
\end{equation}
where $\alpha$ is chosen on a held-out validation set as in \cite{marczakno}.

%% file: sec/4_experiments.tex
\begin{table*}[t]
\caption{Ablation study of the proposed Essential Subspace Decomposition (ESD)-based merging and Polarized Scaling. Results are reported in terms of average absolute accuracy, with normalized average accuracy shown as subscripts in parentheses.}
\label{tab:ablation}
\setlength{\tabcolsep}{4.4pt}
\begin{tabular}{l>{\centering\arraybackslash}p{1cm}>{\centering\arraybackslash}p{1cm}ccc|ccccccc}
\toprule[1.5pt] 
\multirow{2}{*}{Method} & \multicolumn{2}{c}{Decomposition} & \multicolumn{3}{c|}{Polarized Scaling} & \multicolumn{3}{c}{ViT-B/16} & \multicolumn{3}{c}{ViT-L/14}
\\
\cmidrule[0.5pt](lr){2-3}\cmidrule[0.5pt](lr){4-6}\cmidrule[0.5pt](lr){7-9}\cmidrule[0.5pt](lr){10-12} & SVD & ESD & $\ \ \beta_{\ell}\ \ $ & $s_{t}^{(\ell)}$ & $c_{j}^{(\ell)}$ & 8 tasks & 14 tasks & 20 tasks & 8 tasks & 14 tasks & 20 tasks
\\
\midrule[0.5pt] 

Baseline & \textcolor{green}{\ding{51}} & \textcolor{red}{\ding{55}} & \textcolor{red}{\ding{55}} & \textcolor{red}{\ding{55}} & \textcolor{red}{\ding{55}} & 89.0$_{(93.9)}$ & 84.6$_{(91.0)}$ & 80.6$_{(86.5)}$ & 93.0$_{(97.0)}$ & 89.2$_{(94.4)}$ & 87.7$_{(92.5)}$ \\

 & \textcolor{green}{\ding{51}} & \textcolor{red}{\ding{55}} & \textcolor{green}{\ding{51}} & \textcolor{green}{\ding{51}} & \textcolor{green}{\ding{51}} & 89.6$_{(94.6)}$ & 85.4$_{(91.9)}$ & 82.1$_{(88.1)}$ & 93.4$_{(97.4)}$ & 89.6$_{(95.0)}$ & 88.1$_{(93.0)}$ \\

 & \textcolor{red}{\ding{55}} & \textcolor{green}{\ding{51}} & \textcolor{red}{\ding{55}} & \textcolor{red}{\ding{55}} & \textcolor{red}{\ding{55}} & 90.9$_{(95.9)}$ & 85.8$_{(92.2)}$ & 82.8$_{(88.7)}$ & 94.5$_{(98.6)}$ & 90.7$_{(95.9)}$ & 89.5$_{(94.2)}$ \\

 & \textcolor{red}{\ding{55}} & \textcolor{green}{\ding{51}} & \textcolor{green}{\ding{51}} & \textcolor{red}{\ding{55}} & \textcolor{red}{\ding{55}} & 91.4$_{(96.4)}$ & 86.6$_{(93.1)}$ & 83.7$_{(89.6)}$ & 94.6$_{(98.7)}$ & 90.9$_{(96.2)}$ & 90.0$_{(94.8)}$ \\


\rowcolor[HTML]{C6E2FF}ESM (Ours) & \textcolor{red}{\ding{55}} & \textcolor{green}{\ding{51}} & \textcolor{green}{\ding{51}} & \textcolor{green}{\ding{51}} & \textcolor{green}{\ding{51}} & \textbf{91.8}$_{(\textbf{97.0})}$ & \textbf{87.4}$_{(\textbf{94.1})}$ & \textbf{84.9}$_{(\textbf{91.1})}$ & \textbf{94.8}$_{(\textbf{98.9})}$ & \textbf{91.3}$_{(\textbf{96.8})}$ & \textbf{90.4}$_{(\textbf{95.3})}$ \\

\bottomrule[1.5pt] 
\end{tabular}
\end{table*}

\begin{table}[t]
\caption{Time cost of major operations during model merging. 
``\textit{Forward}'': forward pass of 32 proxy samples to obtain $\Delta_O$. ``\textit{PCA}'': principal component analysis performed on $\Delta_O$. ``\textit{Orthogonalization}'': orthogonalization of the matrices $P_{\text{cat}}$ and $A_{\text{cat}}$.
}
\label{tab:overhead}
\setlength{\tabcolsep}{5pt}
\begin{tabular}{lccc}
\toprule[1.5pt] 
Model & \textit{Forward} & \textit{PCA} & \textit{Orthogonalization}
\\
\midrule[0.5pt] 
ViT-B/32 & 0.03 s/task & 1.31 s/task & 13.74 s (once) \\
ViT-B/16 & 0.04 s/task & 1.39 s/task & 13.89 s (once) \\
ViT-L/14 & 0.06 s/task & 5.20 s/task & 70.83 s (once) \\

\bottomrule[1.5pt] 
\end{tabular}
\end{table}

\section{Experiments}
\label{seq:experiments}

\subsection{Experimental Setup}
Following \cite{wanglocalizing}, we evaluate multi-task merging on benchmarks of 8, 14, and 20 tasks. The 8-task set includes: Cars \cite{krause20133d}, DTD \cite{cimpoi2014describing}, EuroSAT \cite{helber2019eurosat}, GTSRB \cite{stallkamp2011german}, MNIST \cite{lecun2002gradient}, RESISC45 \cite{cheng2017remote}, SUN397 \cite{xiao2016sun}, and SVHN \cite{netzer2011reading}. The 14-task benchmark extends this with CIFAR100 \cite{krizhevsky2009learning}, STL10 \cite{coates2011analysis}, Flowers102 \cite{nilsback2008automated}, OxfordIIITPet \cite{parkhi2012cats}, PCAM \cite{veeling2018rotation}, and FER2013 \cite{goodfellow2013challenges}. The 20-task set further incorporates EMNIST \cite{cohen2017emnist}, CIFAR10 \cite{krizhevsky2009learning}, Food101 \cite{bossard2014food}, FashionMNIST \cite{xiao2017fashion}, RenderedSST2 \cite{socher2013recursive}, and KMNIST \cite{clanuwat2018deep}.
We employ three variants of the CLIP model \cite{radford2021learning} with ViT-B/32, ViT-B/16, and ViT-L/14 visual encoders as our pre-trained base models. For the task-specific parameters, we utilize the fine-tuned checkpoints provided in the TALL-masks \cite{wanglocalizing} repository, corresponding to the tasks listed above.
We report performance using both absolute and normalized accuracy, adhering to the standard evaluation practices \cite{wanglocalizing}.

\subsection{Main Results}

As presented in Table \ref{tab:main_results}, we compare our proposed method, ESM, against a comprehensive suite of established model merging methods: Weight Averaging \cite{wortsman2022model}, Task Arithmetic \cite{ilharcoediting}, TIES-Merging \cite{yadav2023ties}, Consensus TA \cite{wanglocalizing}, TSV-M \cite{gargiulo2025task}, and Iso-CTS \cite{marczakno}.
We also include the performance of the zero-shot base model and the average performance of single-task fine-tuned expert models as the lower and upper bounds, respectively.
Our results demonstrate that ESM achieves state-of-the-art performance across all benchmark settings and base model variants. Notably, the performance gains are particularly pronounced in challenging scenarios with a large number of tasks or when using models with smaller capacity (e.g., ViT-B/32 vs. ViT-L/14). This strongly validates the effectiveness of our method in preserving essential task-specific knowledge while significantly mitigating inter-task interference, leading to a superior and more robust merged model.

\subsection{Ablation and Analysis}

\paragraph{Ablation of Proposed ESD and PS.}
We conduct an ablation study of the proposed ESD and PS modules, as shown in Table \ref{tab:ablation}. The results demonstrate that decomposing and merging in the essential subspace leads to a substantial improvement over performing the same operations in the SVD subspace, which is consistent with our theoretical and experimental analyses. Furthermore, the proposed PS module helps reduce task interference while preserving essential task-specific knowledge, leading to additional performance gains. Notably, PS proves beneficial not only when combined with ESD-based merging, but also significantly improves the performance of the baseline method operating in the SVD subspace, highlighting its general applicability.

\paragraph{Computational Overhead.}
We report the major computational overhead introduced by our method, which includes the forward pass of a 32-sample proxy set to compute the activation shift $\Delta_O$, the PCA on $\Delta_O$ to obtain the principal directions for the essential subspace, and the orthogonalization of the matrices $P_{\text{cat}}$ and $A_{\text{cat}}$. As shown in Table \ref{tab:overhead}, the experiments conducted on an RTX 4090 GPU demonstrate that the additional overhead of our method is minimal.

\paragraph{Comparison of ESD and SVD.}
Figure \ref{fig:info_retain} shows the cumulative energy retained as different proportions of components are preserved, defined using the singular values (SVD) or eigenvalues (ESD) as detailed in Appendix \ref{sec:energy_retention}. Our proposed ESD exhibits a highly concentrated energy distribution, indicating its ability to preserve essential task-specific knowledge with fewer components.
Figure \ref{fig:cka} evaluates feature preservation when retaining only 5\% of components per layer, using Centered Kernel Alignment (CKA) similarity \cite{kornblith2019similarity}. We measure the similarity between the merged model and the fine-tuned expert using the class token from the final layer, based on the feature difference relative to the zero-shot pre-trained model. ESD more effectively preserves task-specific features than SVD, further confirming its advantage in retaining critical knowledge.

\begin{figure}[t]
    \centering
    \begin{subfigure}{0.49\linewidth}
        \centering
        \includegraphics[width=\linewidth]{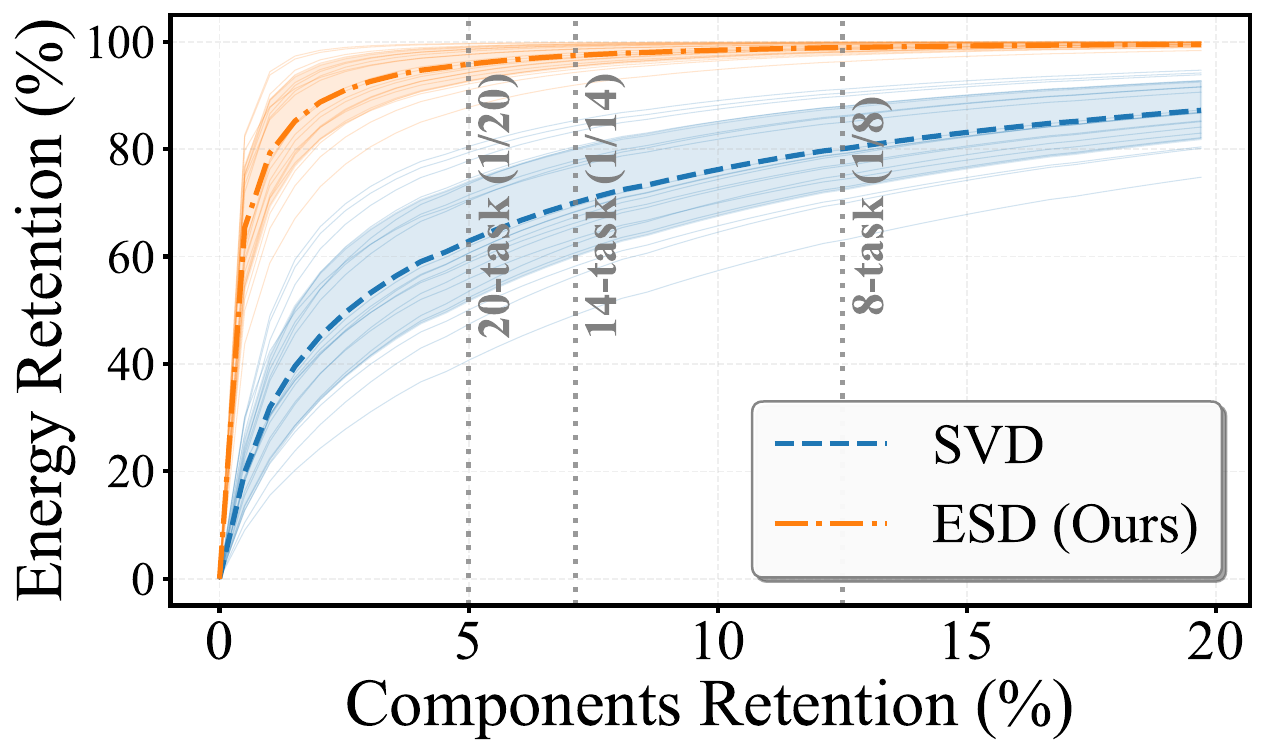}
        \caption{ViT-B/16}
    \end{subfigure}
    \hfill
    \begin{subfigure}{0.49\linewidth}
        \centering
        \includegraphics[width=\linewidth]{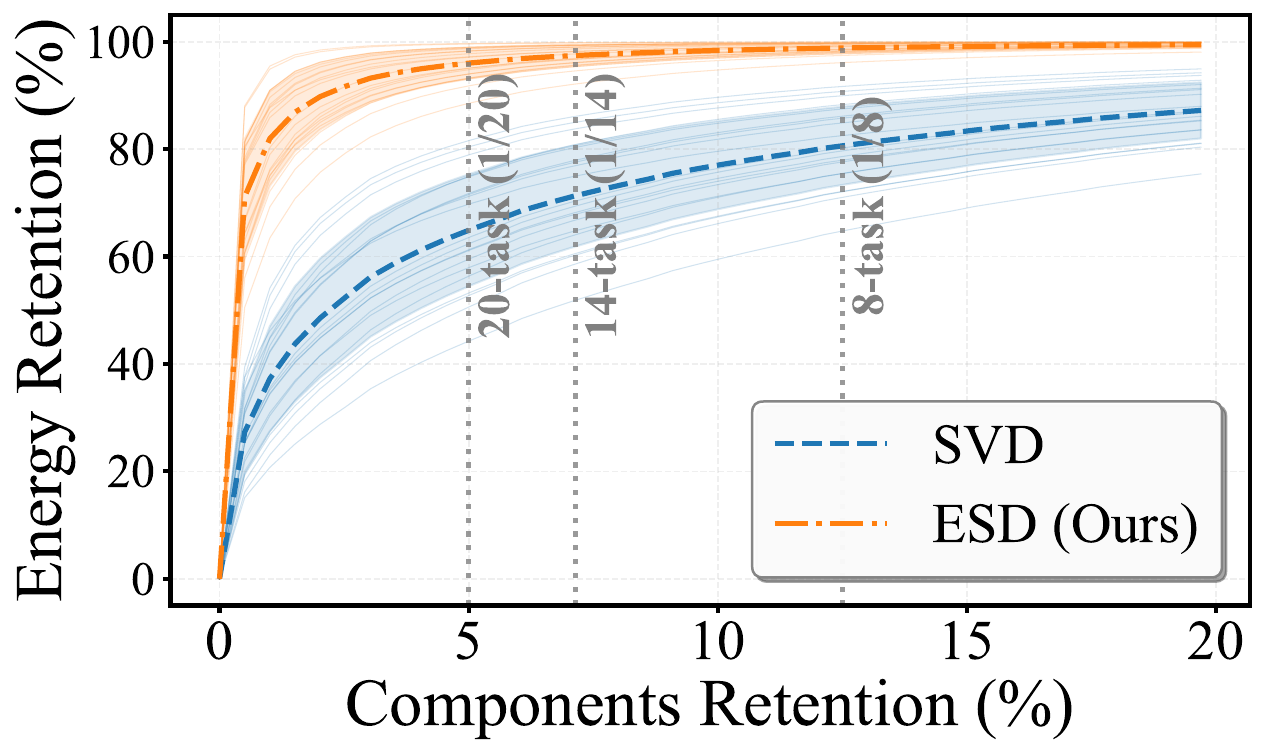}
        \caption{ViT-L/14}
    \end{subfigure}
    \caption{Energy retention as a function of the fraction of principal components retained from the task matrices. Results are reported for each of the 20 tasks as well as the average.}
    \label{fig:info_retain}
\end{figure}

\begin{figure}[t]
    \centering
    \begin{subfigure}{0.49\linewidth}
        \centering
        \includegraphics[width=\linewidth]{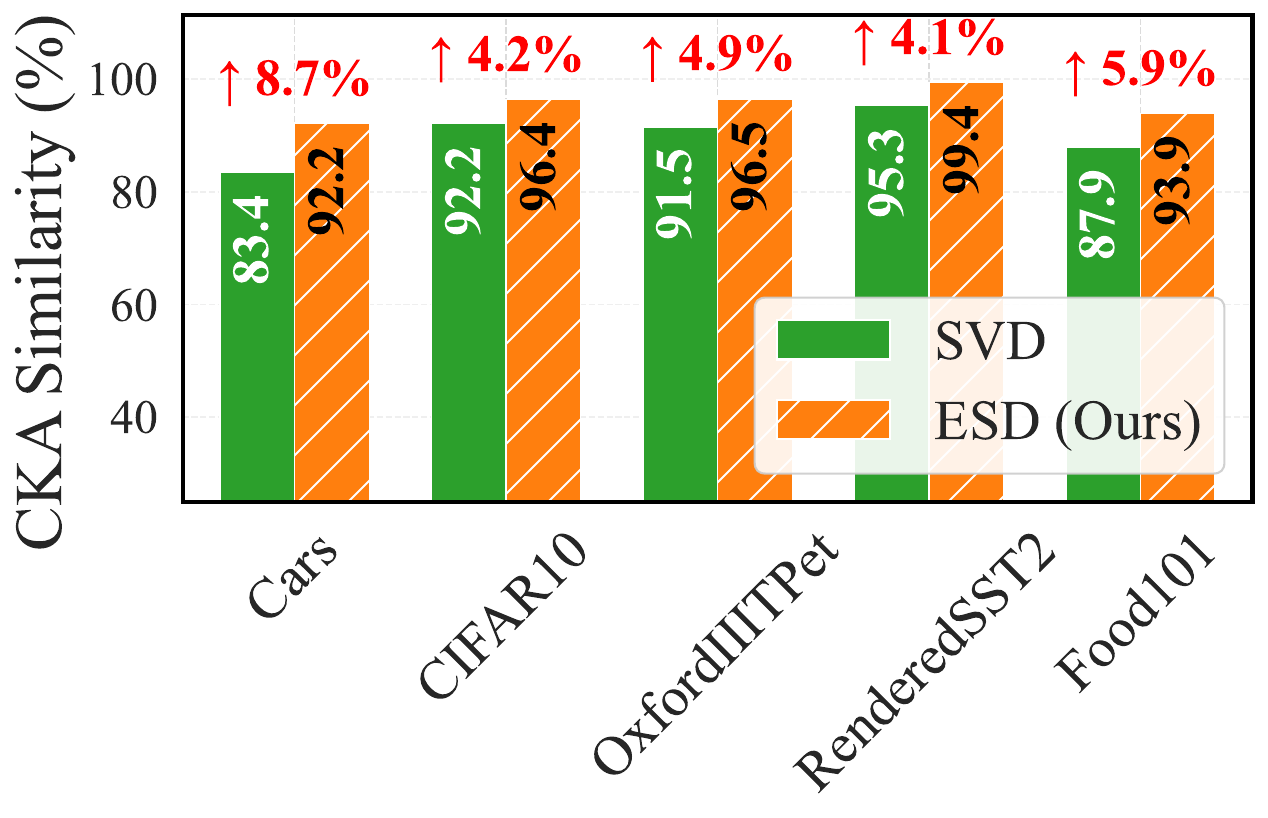}
        \caption{ViT-B/16}
    \end{subfigure}
    \hfill
    \begin{subfigure}{0.49\linewidth}
        \centering
        \includegraphics[width=\linewidth]{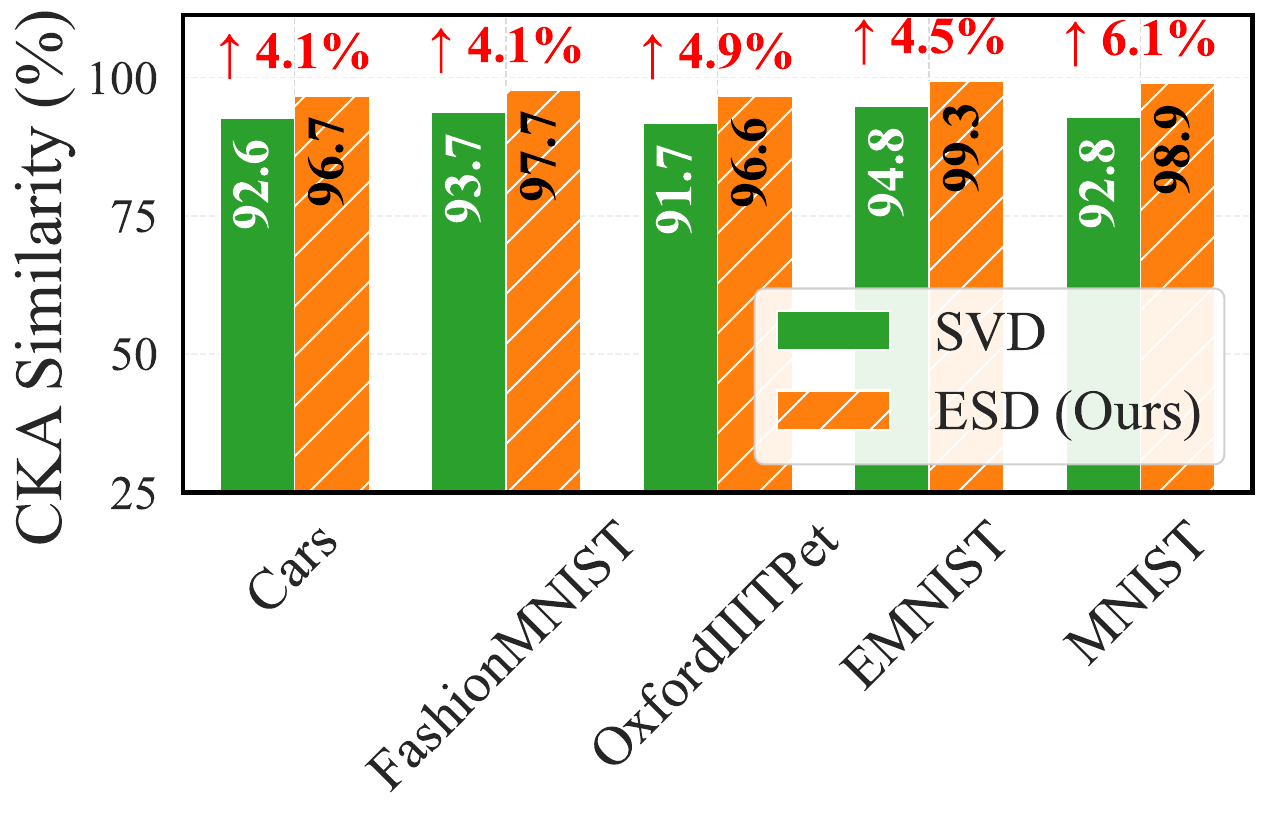}
        \caption{ViT-L/14}
    \end{subfigure}
    \caption{CKA similarity with the fine-tuned model when retaining only 5\% of principal components using SVD and ESD.
    }
    \label{fig:cka}
\end{figure}

\paragraph{Impact of Proxy Dataset Composition.}
We analyze how the composition of the proxy dataset affects Essential Subspace Merging (ESM). Table \ref{tab:proxy_composition} shows the average performance and variance over five runs, with a fixed proxy set size of 32 samples.
Our default setup uses unlabeled samples randomly selected from the respective task's dataset, denoted as ``\textit{Random (ID)}''. We also experiment with extreme scenarios: using samples from only a single class within the task data (``\textit{Class Imbalance}'') and using samples randomly drawn from an out-of-distribution dataset (ImageNet-1k \cite{russakovsky2015imagenet}), denoted as ``\textit{Random (OOD)}''.
Results indicate stable merging performance even with large distribution shifts, which we attribute to the consistent sparse patterns in the features extracted by each task-specific model, regardless of the input data distribution.
This finding enhances the generality of our approach, demonstrating that it provides stable performance gains over SVD-based merging methods even when task-specific data is unavailable.

\begin{table}[t]
\caption{Impact of proxy dataset composition on merging performance, evaluated on the 8-task benchmark. ``\textit{Random (ID)}'': random sampling from in-distribution task data. ``\textit{Class Imbalance}'': sampling only a single class per task. ``\textit{Random (OOD)}'': random sampling from out-of-distribution ImageNet-1k \cite{russakovsky2015imagenet}.}
\label{tab:proxy_composition}
\setlength{\tabcolsep}{2.3pt}
\begin{tabular}{lc|ccc}
\toprule[1.5pt] 
Method & Sampling & ViT-B/32 & ViT-B/16 & ViT-L/14
\\
\midrule[0.5pt] 

\rowcolor[HTML]{ECECEC}SVD & - & 86.6 & 89.6 & 93.4 \\
\rowcolor[HTML]{C6E2FF}ESD & \textit{Random (ID)} & 88.4$_{\pm 0.06}$ & 91.8$_{\pm 0.04}$ & 94.8$_{\pm 0.07}$ \\
ESD & \textit{Class Imbalance} & 88.4$_{\pm 0.10}$ & 91.8$_{\pm 0.06}$ & 94.8$_{\pm 0.04}$ \\
ESD & \textit{Random (OOD)} & 88.3$_{\pm 0.04}$ & 91.8$_{\pm 0.04}$ & 94.8$_{\pm 0.01}$ \\

\bottomrule[1.5pt] 
\end{tabular}
\end{table}

\begin{figure}[t]
    \centering
    \begin{subfigure}{0.49\linewidth}
        \centering
        \includegraphics[width=\linewidth]{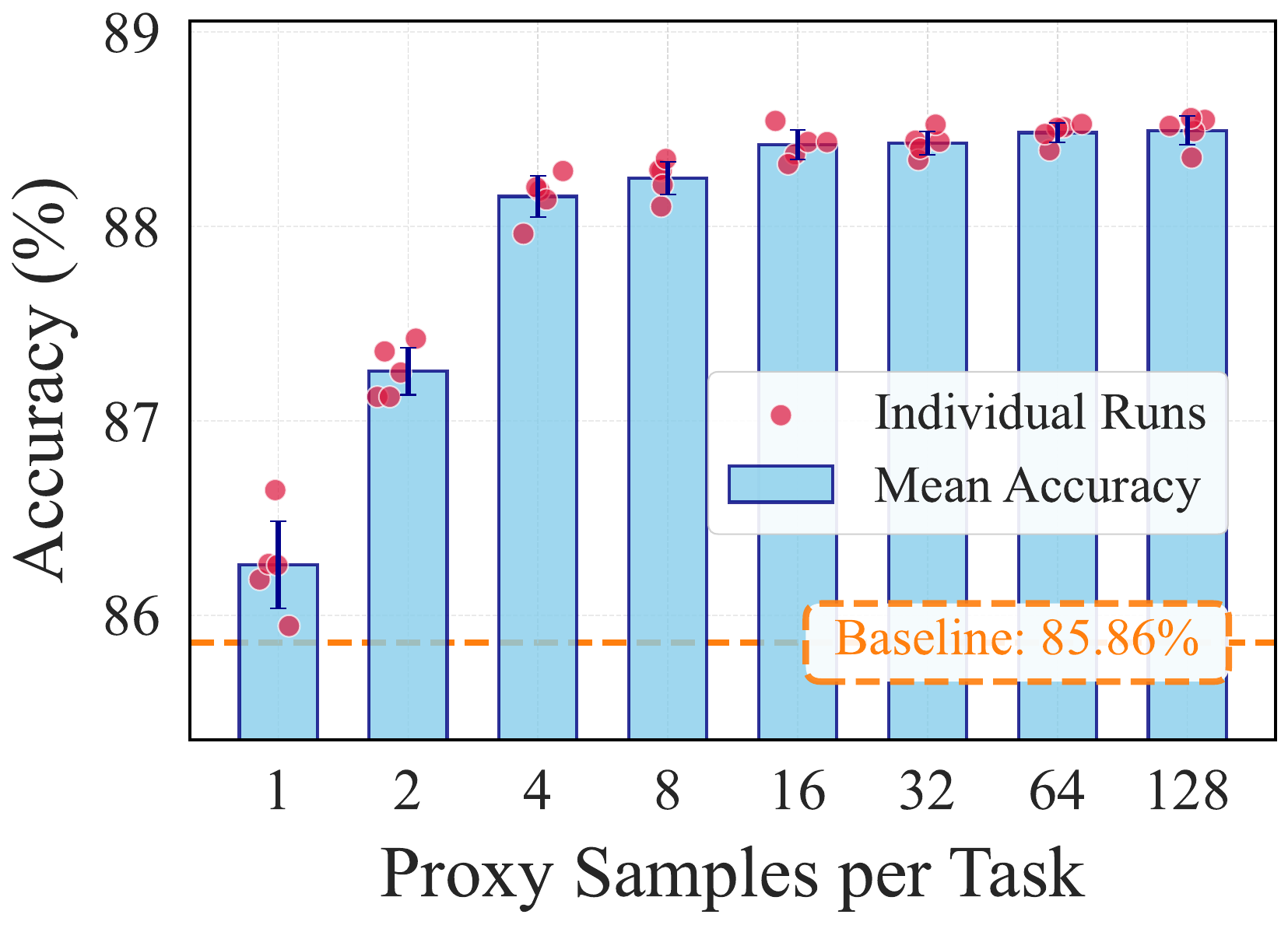}
        \caption{ViT-B/32}
    \end{subfigure}
    \hfill
    \begin{subfigure}{0.49\linewidth}
        \centering
        \includegraphics[width=\linewidth]{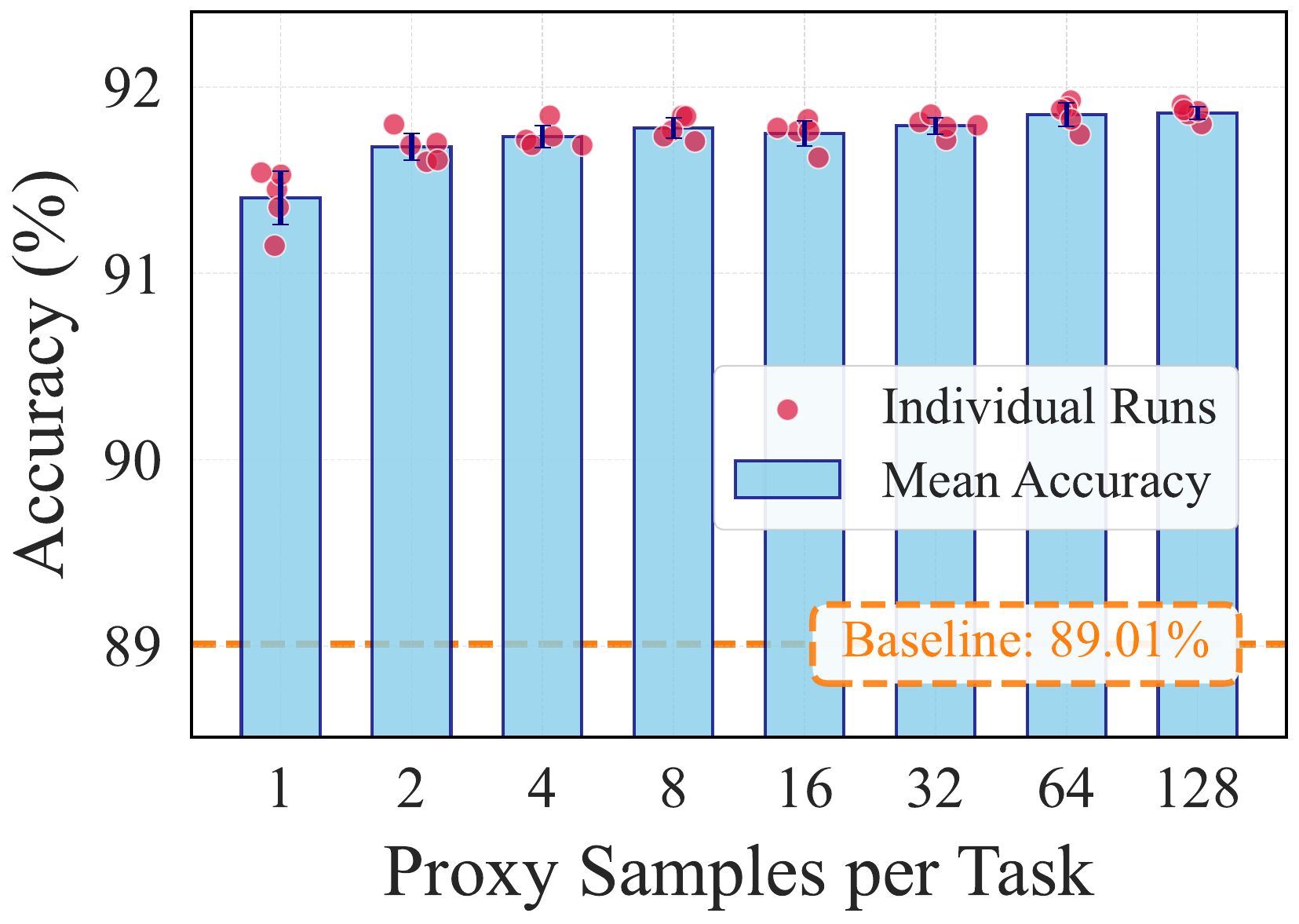}
        \caption{ViT-B/16}
    \end{subfigure}
    \caption{Ablation study on the number of proxy samples used in ESM, reporting the average accuracy on the 8-task benchmark. Each configuration was evaluated over five random runs.}
    \label{fig:ablation_proxy_size}
\end{figure}

\paragraph{Ablation of Proxy Dataset Size.}
We perform an ablation study on the size of the proxy dataset used in Essential Subspace Merging (ESM), as illustrated in Figure~\ref{fig:ablation_proxy_size}. The results show that using only four unlabeled samples per task is sufficient to achieve robust performance, while even a single sample can consistently outperform the baseline.

\paragraph{Detailed Ablation Study of Polarized Scaling.}
We perform a detailed ablation of Polarized Scaling (PS) in Table~\ref{tab:ablation_ps}. We compare three alternatives: (i) ``\textit{Reverse}'', which applies the reciprocal of the scaling factors; (ii) ``\textit{Noise$--$}'', which retains only factors $<1$ to suppress noisy parameters; and (iii) ``\textit{Signal$++$}'', which retains only factors $>1$ to enhance important parameters.
Experimental results show that, compared with ``\textit{None}'' (i.e., without scaling), the ``\textit{Reverse}'' operation significantly degrades performance because important parameters are overwhelmed by redundant ones. Both ``\textit{Noise$--$}'' and ``\textit{Signal$++$}'' improve over ``\textit{None}'' by raising the signal-to-noise ratio of important parameters. Our full PS method, which combines both suppression and amplification, achieves the best performance.

\begin{table}[t]
\caption{Detailed ablation study of the proposed Polarized Scaling. The symbol $\gamma$ denotes the scaling factor at three different levels: $s$, $c$, or $\beta$. The following variants are compared: (i) ``\textit{Reverse}'': taking the reciprocal of the scaling factors; (ii) ``\textit{Noise$--$}'': retaining only factors $<1$ to suppress noisy parameters; (iii) ``\textit{Signal$++$}'': retaining only factors $>1$ to enhance important parameters.}
\label{tab:ablation_ps}
\setlength{\tabcolsep}{3.8pt}
\begin{tabular}{lcccc}
\toprule[1.5pt] 
\multirow{2}{*}{Method} & \multirow{2}{*}{Scaling} & \multicolumn{3}{c}{ViT-B/32}
\\
\cmidrule[0.5pt](lr){3-5} && 8 tasks & 14 tasks & 20 tasks
\\
\midrule[0.5pt] 

\rowcolor[HTML]{ECECEC}\textit{None} & - & 86.7$_{(93.4)}$ & 81.1$_{(89.0)}$ & 78.1$_{(85.5)}$ \\
\textit{Reverse} & $1/\gamma$ & 82.9$_{(89.2)}$ & 76.3$_{(83.7)}$ & 72.6$_{(79.4)}$ \\
\textit{Noise$--$} & $\min(\gamma,1)$ & 87.4$_{(94.1)}$ & 83.0$_{(91.2)}$ & 80.5$_{(88.1)}$ \\
\textit{Signal$++$} & $\max(\gamma,1)$ & 87.7$_{(94.6)}$ & 82.8$_{(91.0)}$ & 80.2$_{(87.8)}$ \\

\rowcolor[HTML]{C6E2FF}PS (Ours) &  $\gamma$ & \textbf{88.4}$_{(\textbf{95.3})}$ & \textbf{83.7}$_{(\textbf{92.0})}$ & \textbf{81.3}$_{(\textbf{88.9})}$ \\

\bottomrule[1.5pt] 
\end{tabular}
\end{table}

\begin{figure}[t]
    \centering
    \begin{subfigure}{0.49\linewidth}
        \centering
        \includegraphics[width=\linewidth]{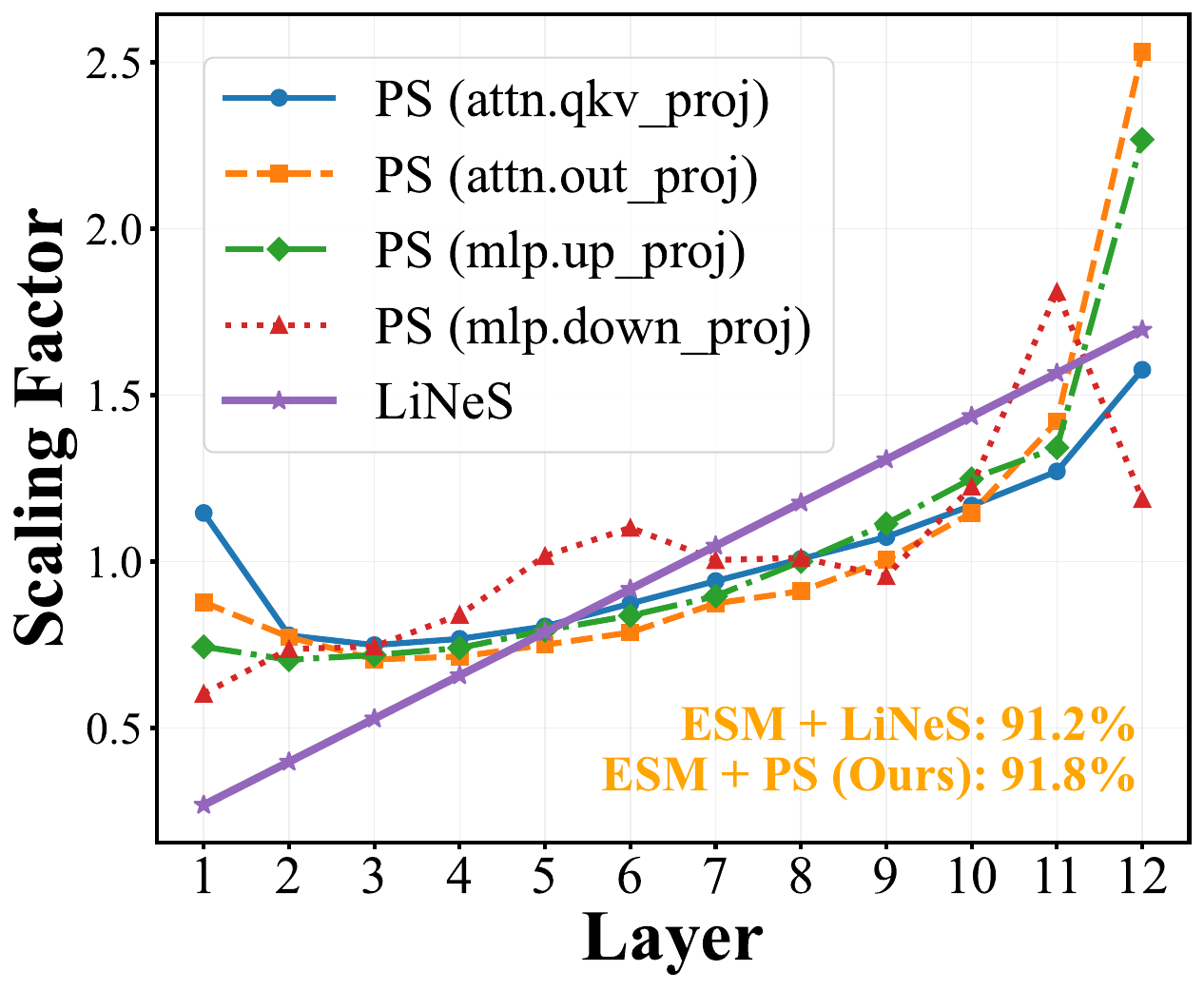}
        \caption{8-task}
    \end{subfigure}
    \hfill
    \begin{subfigure}{0.49\linewidth}
        \centering
        \includegraphics[width=\linewidth]{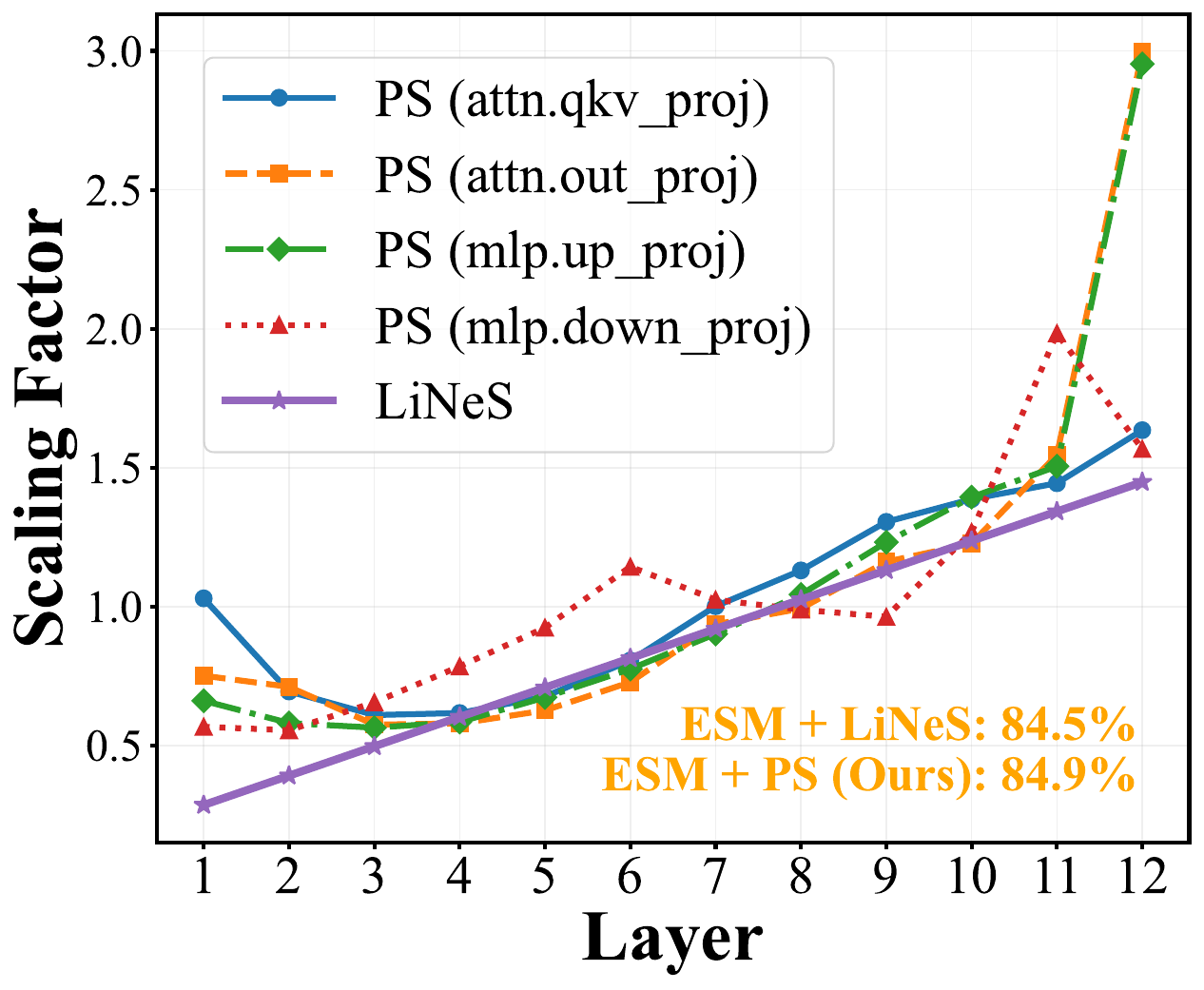}
        \caption{20-task}
    \end{subfigure}
    \caption{Layer-wise scaling coefficients for merged ViT-B/16.}
    \label{fig:PS_vs_LiNeS}
\end{figure}

\paragraph{Analysis of Scaling Coefficients.}
Figure \ref{fig:PS_vs_LiNeS} compares the layer-wise coefficients of our Polarized Scaling (PS) with LiNeS \cite{wanglines}. Based on the functional roles of shallow and deep blocks, LiNeS applies coefficients that increase linearly with depth, requiring a validation set to determine the linear multiplier. In contrast, our PS method provides a more refined measure of parameter importance by leveraging layer norms. It performs separate scaling for four key layer types in a ViT model: the QKV and Output projections in the attention module, and the Up and Down projections in the MLP. This approach not only achieves superior merging performance but also removes the need for a validation set by computing coefficients directly from parameter norms.

%% file: sec/5_conclusion.tex
\section{Conclusion}
In this paper, we propose Essential Subspace Merging (ESM), a novel model merging framework that narrows the gap to expert models. It introduces the ESD, a feature shift distribution-aware decomposition method that identifies a sparser and more functionally critical subspace than SVD, thereby reducing inter-task interference. Observing that high-norm updates indicate consensus-driven directions, we further propose Polarized Scaling (PS), a three-level mechanism that amplifies high-confidence signals while suppressing noise. ESM achieves state-of-the-art performance and provides a robust and effective solution for model merging.

\section*{Acknowledgment}
This research was supported by the Jiangsu Science Foundation (BG2024036, BK20243012), the National Science Foundation of China (62125602, U24A20324, 92464301), the New Cornerstone Science Foundation through the XPLORER PRIZE, the Fundamental Research Funds for the Central Universities (2242025K30024), and the Big Data Computing Center of Southeast University.

%% file: sec/X_suppl.tex
\clearpage
\setcounter{page}{1}
\maketitlesupplementary

\appendix
\newtheorem{theorem}{Theorem}

\section{Proofs}

This section provides the proofs for the expected output error after truncation for both the standard SVD and the proposed Essential Subspace Decomposition (ESD), as well as the equivalence of whitening and Orthogonal Procrustes.

\subsection{Proof for SVD Truncation Error}
\label{sec:Proof_SVD_Truncation_Error}

\begin{theorem}
Given a task matrix $\Delta_W \in \mathbb{R}^{d_{\text{out}} \times d_{\text{in}}}$ with its singular value decomposition $\Delta_W = U\Sigma V^\top = \sum_{i=1}^r \sigma_i u_i v_i^\top$. Let $\widehat{\Delta_W} = \sum_{i=1}^k \sigma_i u_i v_i^\top$ be its top-$k$ rank approximation. For an input $x$ drawn from a distribution $\mathcal{D}$, the expected squared $L_2$ error on the output activation is:
$$ \mathbb{E}_{x\sim\mathcal{D}} \left[ \|\Delta_W x - \widehat{\Delta_W}x \|_2^2 \right] = \sum_{i=k+1}^r\sigma_{i}^{2} \cdot \mathbb{E}_{x\sim\mathcal{D}} \left[ (v_{i}^{\top}x)^2 \right].
$$
\end{theorem}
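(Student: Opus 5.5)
The plan is to compute the error pointwise for a fixed input $x$ and only then take expectations, so the distribution $\mathcal{D}$ enters solely through linearity of expectation. First I will write the residual matrix explicitly. Subtracting the truncated sum from the full expansion gives
\begin{equation*}
\Delta_W - \widehat{\Delta_W} = \sum_{i=k+1}^r \sigma_i u_i v_i^\top ,
\end{equation*}
and hence, for any fixed $x \in \mathbb{R}^{d_{\text{in}}}$,
\begin{equation*}
\Delta_W x - \widehat{\Delta_W} x = \sum_{i=k+1}^r \sigma_i (v_i^\top x)\, u_i .
\end{equation*}
Here each $v_i^\top x$ is a scalar, so the residual is a linear combination of the left singular vectors $u_{k+1},\dots,u_r$.

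The key step is to evaluate the squared norm of this combination using orthonormality of the columns of $U$, i.e.\ $u_i^\top u_j = \delta_{ij}$. Expanding the inner product gives a double sum $\sum_{i,j>k} \sigma_i\sigma_j (v_i^\top x)(v_j^\top x)\, u_i^\top u_j$. All cross terms vanish, which leaves the Pythagorean identity
\begin{equation*}
\|\Delta_W x - \widehat{\Delta_W} x\|_2^2 = \sum_{i=k+1}^r \sigma_i^2 (v_i^\top x)^2 .
\end{equation*}
This identity holds deterministically for every $x$.

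Finally, I will take $\mathbb{E}_{x\sim\mathcal{D}}$ of both sides. The sum is finite and the $\sigma_i$ are constants, so linearity of expectation gives $\sum_{i=k+1}^r \sigma_i^2\, \mathbb{E}_{x\sim\mathcal{D}}[(v_i^\top x)^2]$, which is the claimed formula. For the expectations to be meaningful, I will assume $\mathcal{D}$ has finite second moments. Without that assumption, both sides can still be read as values in $[0,\infty]$, since every term is nonnegative.

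There is no serious obstacle in this argument. The only point needing care is the cancellation of the cross terms, which relies on orthonormality of the \emph{left} singular vectors, not the right ones. The right singular vectors $v_i$ need not be orthogonal with respect to the input covariance, and that is exactly why the factors $\mathbb{E}[(v_i^\top x)^2]$ survive in the final expression. The same fact is the source of the input-distribution dependence that the paper contrasts with Equation~\ref{eq:esd_loss}.
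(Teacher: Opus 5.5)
Your proposal is correct and follows the paper's proof step for step. You write the residual as $\sum_{i=k+1}^r \sigma_i (v_i^\top x)\, u_i$, use orthonormality of the left singular vectors to remove the cross terms, and then take the expectation by linearity; your remark about finite second moments (or reading both sides in $[0,\infty]$) is a harmless refinement that the paper leaves implicit.
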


\begin{proof}
The error matrix resulting from the truncation is the sum of the discarded components:
\begin{equation}
\Delta_W - \widehat{\Delta_W} = \sum_{i=k+1}^{r} \sigma_i u_i v_i^\top.
\end{equation}
The error on the output activation for a given input $x$ is:
\begin{equation}
(\Delta_W - \widehat{\Delta_W})x = \left( \sum_{i=k+1}^{r} \sigma_i u_i v_i^\top \right) x = \sum_{i=k+1}^{r} \sigma_i u_i (v_i^\top x).
\end{equation}
Since $v_i^\top x$ is a scalar, we can rewrite this as a linear combination of the orthonormal vectors $u_i$. The squared $L_2$ norm of this error vector is:
\begin{equation}
\|(\Delta_W - \widehat{\Delta_W})x\|_2^2 = \left\| \sum_{i=k+1}^{r} (\sigma_i v_i^\top x) u_i \right\|_2^2.
\end{equation}
Because the left singular vectors $\{u_i\}$ form an orthonormal set, the squared norm of their weighted sum is the sum of the squares of the weights:
\begin{equation}
\|(\Delta_W - \widehat{\Delta_W})x\|_2^2 = \sum_{i=k+1}^{r} (\sigma_i v_i^\top x)^2 = \sum_{i=k+1}^{r} \sigma_i^2 (v_i^\top x)^2.
\end{equation}
By taking the expectation over the input distribution $\mathcal{D}$ and applying the linearity of expectation, we arrive at the final expression:
\begin{equation}
\mathbb{E}_{x\sim\mathcal{D}} \left[ \|\Delta_W x - \widehat{\Delta_W}x \|_2^2 \right] = \sum_{i=k+1}^r\sigma_{i}^{2} \cdot \mathbb{E}_{x\sim\mathcal{D}} \left[ (v_{i}^{\top}x)^2 \right].
\end{equation}
This completes the proof.
\end{proof}

\subsection{Proof for ESD Truncation Error}
\label{sec:Proof_ESD_Truncation_Error}

\begin{theorem}
Given a task matrix $\Delta_W \in \mathbb{R}^{d_{\text{out}} \times d_{\text{in}}}$, let $\hat{P} \in \mathbb{R}^{d_{\text{out}} \times k}$ be the matrix whose columns are the top-$k$ principal components (eigenvectors) derived from the activation shift matrix $\Delta_O = X_{\text{proxy}}\Delta_W^\top$. Let $\widehat{\Delta_W} = \hat{P}\hat{A} = \hat{P} (\hat{P}^\top \Delta_W)$ be the ESD reconstruction. For an input $x \sim \mathcal{D}$, the expected squared $L_2$ error on the output is proportional to the sum of the discarded eigenvalues:
$$ \mathbb{E}_{x\sim\mathcal{D}} \left[ \|\Delta_W x - \widehat{\Delta_W}x \|_2^2 \right] = \sum_{i=k+1}^{d_{\text{out}}} \lambda_i. $$
\end{theorem}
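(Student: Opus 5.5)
The plan is to reduce the error to a projection residual and then read it off in the eigenbasis of the second-moment matrix of the activation shifts. One interpretive choice is needed. The statement is only true if the PCA eigenvalues $\lambda_i$ are those of the (uncentered) second-moment matrix of the activation shift $\Delta_W x$ under $\mathcal{D}$:
\begin{equation*}
C = \mathbb{E}_{x\sim\mathcal{D}}\left[\Delta_W x x^\top \Delta_W^\top\right] = \Delta_W \Sigma_x \Delta_W^\top, \qquad \Sigma_x = \mathbb{E}[xx^\top].
\end{equation*}
The proxy version $\frac{1}{n}\Delta_O^\top \Delta_O$ is the empirical estimate of $C$ when the rows of $X_{\text{proxy}}$ are sampled from $\mathcal{D}$. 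I will state this identification at the start. Its eigendecomposition $C = P\Lambda P^\top$ gives the orthonormal basis $P=[\boldsymbol{p}_1,\dots,\boldsymbol{p}_{d_{\text{out}}}]$ and the eigenvalues $\lambda_1\ge\dots\ge\lambda_{d_{\text{out}}}\ge 0$.

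The first step rewrites the error. Since $\widehat{\Delta_W}=\hat P\hat P^\top\Delta_W$, we have $\Delta_W-\widehat{\Delta_W}=(I-\hat P\hat P^\top)\Delta_W$. Completeness of the orthonormal basis $P$ gives $I-\hat P\hat P^\top=\sum_{i=k+1}^{d_{\text{out}}}\boldsymbol{p}_i\boldsymbol{p}_i^\top$. Hence
\begin{equation*}
(\Delta_W-\widehat{\Delta_W})x=\sum_{i>k}\boldsymbol{p}_i\,(\boldsymbol{p}_i^\top\Delta_W x).
\end{equation*}

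The second step takes norms and expectations. Orthonormality of the $\boldsymbol{p}_i$, exactly as in the proof of the SVD theorem above, gives the squared norm $\sum_{i>k}(\boldsymbol{p}_i^\top\Delta_W x)^2$. Taking expectations and using linearity gives $\sum_{i>k}\boldsymbol{p}_i^\top\,\mathbb{E}[\Delta_W xx^\top\Delta_W^\top]\,\boldsymbol{p}_i=\sum_{i>k}\boldsymbol{p}_i^\top C\boldsymbol{p}_i$. Finally, $C\boldsymbol{p}_i=\lambda_i\boldsymbol{p}_i$ with $\|\boldsymbol{p}_i\|=1$, so each term equals $\lambda_i$, which yields the claimed sum. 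Equivalently, the whole computation is $\operatorname{tr}\!\big((I-\hat P\hat P^\top)C\big)$.

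The main obstacle is not the algebra but the definitional gap: the equality holds exactly only when $\lambda_i$ are eigenvalues of the population uncentered second moment of $\Delta_W x$. I would handle this in a remark covering two points.
\begin{itemize}
\item If standard centered PCA on $\Delta_O$ is used, the same argument gives the covariance eigenvalue sum plus a mean term $\|(I-\hat P\hat P^\top)\Delta_W\mu\|^2$. This vanishes when the inputs are centered or when $\mathcal{D}$ is taken to be the empirical proxy distribution with centered features.
\item If $\mathcal{D}$ is the empirical distribution of the proxy tokens, the identity is exact with $\lambda_i$ the eigenvalues of $\frac1n\Delta_O^\top\Delta_O$. Otherwise it holds up to sampling error.
\end{itemize}
Optionally, I would add that by Ky Fan's theorem this choice of $\hat P$ minimizes the error over all rank-$k$ orthogonal projections. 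This justifies the ``essential'' label and contrasts with the input-dependent factors $\mathbb{E}[(v_i^\top x)^2]$ in the SVD theorem.
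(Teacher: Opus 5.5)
Your proof is correct and follows the same route as the paper's: you write the residual as $(I-\hat P\hat P^\top)\Delta_W x=\sum_{i>k}(\boldsymbol{p}_i^\top\Delta_W x)\boldsymbol{p}_i$, use orthonormality, and identify each $\mathbb{E}[(\boldsymbol{p}_i^\top\Delta_W x)^2]$ with $\lambda_i$. The only difference is that you take $\lambda_i$ directly as eigenvalues of the uncentered second moment $C$, whereas the paper uses $\lambda_i=\mathrm{Var}(\boldsymbol{p}_i^\top\Delta_W x)$ plus a zero-mean assumption; your remark that mean-subtraction inside PCA alone still leaves the extra term $\|(I-\hat P\hat P^\top)\Delta_W\mu\|_2^2$ is a correct sharpening of the paper's parenthetical ``or mean-subtracted during PCA''.
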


\begin{proof}
The error on the output activation for an input $x$ is:
\begin{equation}
\Delta_W x - \widehat{\Delta_W}x = \Delta_W x - \hat{P} \hat{P}^\top \Delta_W x = (I - \hat{P} \hat{P}^\top) \Delta_W x.
\end{equation}
The matrix $(I - \hat{P} \hat{P}^\top)$ is the projection matrix onto the subspace spanned by the discarded eigenvectors $\{p_{k+1}, \dots, p_{d_{\text{out}}}\}$. Let $y = \Delta_W x$ be the activation shift for input $x$. The error vector can be expressed as the projection of $y$ onto this orthogonal subspace:
\begin{equation}
(I - \hat{P} \hat{P}^\top) y = \sum_{i=k+1}^{d_{\text{out}}} (p_i^\top y) p_i.
\end{equation}
Since $\{p_i\}$ form an orthonormal basis, the squared $L_2$ norm is the sum of the squares of the projection coefficients:
\begin{equation}
\begin{aligned}
\|\Delta_W x - \widehat{\Delta_W}x \|_2^2 &= \left\| \sum_{i=k+1}^{d_{\text{out}}} (p_i^\top y) p_i \right\|_2^2 \\
&= \sum_{i=k+1}^{d_{\text{out}}} (p_i^\top y)^2 \\
&= \sum_{i=k+1}^{d_{\text{out}}} (p_i^\top \Delta_W x)^2.
\end{aligned}
\end{equation}
Now, we take the expectation over the input distribution $\mathcal{D}$:
\begin{equation}
\begin{aligned}
\mathbb{E}_{x\sim\mathcal{D}} \left[ \|\Delta_W x - \widehat{\Delta_W}x \|_2^2 \right] &= \mathbb{E}_{x\sim\mathcal{D}} \left[ \sum_{i=k+1}^{d_{\text{out}}} (p_i^\top \Delta_W x)^2 \right] \\
&= \sum_{i=k+1}^{d_{\text{out}}} \mathbb{E}_{x\sim\mathcal{D}} \left[ (p_i^\top \Delta_W x)^2 \right].
\end{aligned}
\end{equation}
By the definition of Principal Component Analysis (PCA), the eigenvalue $\lambda_i$ of the covariance matrix of activation shifts corresponds to the variance of the activation shifts projected onto the $i$-th principal component $p_i$. Assuming the activation shifts are centered (or mean-subtracted during PCA), this variance is:
\begin{equation}
\begin{aligned}
\lambda_i &= \text{Var}(p_i^\top \Delta_W x)\\
&= \mathbb{E}_{x\sim\mathcal{D}} \left[ (p_i^\top \Delta_W x)^2 \right] - \left( \mathbb{E}_{x\sim\mathcal{D}} [p_i^\top \Delta_W x] \right)^2 \\
&= \mathbb{E}_{x\sim\mathcal{D}} \left[ (p_i^\top \Delta_W x)^2 \right].
\end{aligned}
\end{equation}
Substituting this result back into our error expression, we get:
\begin{equation}
\mathbb{E}_{x\sim\mathcal{D}} \left[ \|\Delta_W x - \widehat{\Delta_W}x \|_2^2 \right] = \sum_{i=k+1}^{d_{\text{out}}} \lambda_i.
\end{equation}
This completes the proof, showing that the expected error is solely dependent on the magnitude of the discarded eigenvalues, which represent the functional variance captured by those directions.
\end{proof}

\subsection{Equivalence of Whitening and Orthogonal Procrustes}
\begin{theorem}
The transformations \( X \mapsto X (X^{\top} X)^{-1/2} \) (whitening) and \( X \mapsto U V^{\top} \) (Orthogonal Procrustes), where \( X = U \Sigma V^{\top} \) is the singular value decomposition (SVD) of \( X \), are equivalent.
\end{theorem}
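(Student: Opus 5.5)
The plan is to substitute the SVD of $X$ into the whitening formula and simplify. We must also fix the setting in which $(X^\top X)^{-1/2}$ makes sense. I will use the compact (thin) SVD $X = U\Sigma V^\top$, with $U \in \mathbb{R}^{m\times r}$ having orthonormal columns, $\Sigma = \mathrm{diag}(\sigma_1,\dots,\sigma_r)$ with all $\sigma_i>0$, and $V \in \mathbb{R}^{n\times r}$ having orthonormal columns. For the inverse square root to exist we need $X^\top X$ invertible, i.e. $X$ has full column rank $r=n$. In that case $V$ is square orthogonal and $(X^\top X)^{-1/2}$ is the unique symmetric positive definite inverse square root.

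The key steps are as follows.

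\textbf{Step 1.} Compute
\[
X^\top X = V\Sigma U^\top U \Sigma V^\top = V\Sigma^2 V^\top,
\]
using $U^\top U = I_r$.

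\textbf{Step 2.} Identify the inverse square root. Since $V$ is orthogonal and $\Sigma^2$ is diagonal with positive entries, the candidate $V\Sigma^{-1}V^\top$ is symmetric positive definite. Squaring it gives $V\Sigma^{-2}V^\top$, which is the inverse of $V\Sigma^2V^\top$. By uniqueness of the positive definite square root of a positive definite matrix, this candidate equals $(X^\top X)^{-1/2}$.

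\textbf{Step 3.} Multiply out:
\[
X(X^\top X)^{-1/2} = U\Sigma V^\top V\Sigma^{-1}V^\top = U\Sigma\Sigma^{-1}V^\top = UV^\top,
\]
which is exactly the Orthogonal Procrustes map. As a side remark, we can note that $UV^\top$ is the solution of $\min_{Q^\top Q = I}\|X - Q\|_F$, which justifies the name; this is the polar factor of $X$. Both maps are therefore well defined and agree whenever the whitening formula makes sense.

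The main obstacle is the rank-deficient case, which matters here because $P_{\text{cat}}$ and $A_{\text{cat}}$ may not have full column rank. In that case I would interpret $(X^\top X)^{-1/2}$ as the Moore--Penrose pseudo-inverse square root, namely $V_r\Sigma_r^{-1}V_r^\top$ built from the nonzero singular values only. The same computation then gives $U_rV_r^\top$, the Procrustes factor computed from the compact SVD. I would state explicitly that equivalence holds under this convention, because the full-SVD factor $UV^\top$ is not unique when zero singular values are present.

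A secondary point is the ambiguity of the SVD when singular values repeat. The product $UV^\top$ is independent of the choice of singular vectors, because it equals $X(X^\top X)^{-1/2}$, which is canonically defined. The equivalence proved above therefore also settles this well-definedness issue.
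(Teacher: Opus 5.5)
Your proposal is correct and follows the same route as the paper's proof: substitute the SVD, obtain $X^{\top}X = V\Sigma^{2}V^{\top}$, identify $(X^{\top}X)^{-1/2} = V\Sigma^{-1}V^{\top}$, and cancel $\Sigma\Sigma^{-1}$ to reach $UV^{\top}$. Your added care (the full-column-rank hypothesis, uniqueness of the positive definite square root, and the pseudo-inverse convention with the compact SVD) makes explicit what the paper leaves implicit, and the rank-deficient case matters in this setting: $A_{\text{cat}}$ has more columns than rows for the MLP down-projection, so $A_{\text{cat}}^{\top}A_{\text{cat}}$ is necessarily singular there.
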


\begin{proof}
Let \( X = U \Sigma V^{\top} \) be the SVD of \( X \), where \( \Sigma \) is a diagonal matrix of singular values, we have:
\begin{equation}
X^{\top} X = V \Sigma^{2} V^{\top}.
\end{equation}
It follows that:
\begin{equation}
(X^{\top} X)^{-1/2} = V \Sigma^{-1} V^{\top}.
\end{equation}
Substituting this into the whitening transformation gives:
\begin{equation}
\begin{aligned}
X (X^{\top} X)^{-1/2} &= (U \Sigma V^{\top})(V \Sigma^{-1} V^{\top})\\
&= U \Sigma \Sigma^{-1} V^{\top}\\
&= U V^{\top}.
\end{aligned}
\end{equation}
The whitening operation is equivalent to solving the Orthogonal Procrustes problem, completing the proof.
\end{proof}

\section{Method Details}
\subsection{Methodology Pseudocode}
The pseudocode of our proposed model merging approach is presented in Algorithm \ref{alg:esm}. Steps highlighted in \textcolor{orange}{\textit{orange}} correspond to the essential subspace merging, while \textcolor{blue}{\textit{blue}} annotations indicate the polarized scaling. Our method decomposes each task matrix within its essential subspace and performs truncation, then concatenates the components from all task matrices to reconstruct a new merged matrix. Through three-level scaling, we ultimately obtain a fused model that effectively preserves essential task knowledge while maintaining minimal inter-task interference.

\begin{algorithm}[t]
\caption{Essential Subspace Merging}
\label{alg:esm}
\begin{algorithmic}[1]
\Require Task matrices $\{\Delta_{W_t}^{(\ell)}\}_{t=1}^T$ for all layers $\ell \in \mathcal{L}$, pre-trained weights $\theta_0$, validation set $\mathcal{D}_{\text{val}}$
\Ensure Merged model parameters $\theta_M$

\noindent\Comment{\textcolor{orange}{-------------------------- \textit{Decomposition and Truncation}}}
\For{each task $t = 1$ to $T$ and each layer $\ell$}
    \State Obtain essential basis $P_t^{(\ell)}$
    \State Compute coordinate matrix: $A_t^{(\ell)} \gets (P_t^{(\ell)})^\top \Delta_{W_t}^{(\ell)}$
    \State Truncate to top-$k$ components: $\hat{P}_t^{(\ell)} \gets P_t^{(\ell)}[:,1:k]$, $\hat{A}_t^{(\ell)} \gets A_t^{(\ell)}[1:k,:]$, where $k = \lfloor d_{\text{out}}/T \rfloor$
\EndFor

\noindent\Comment{\textcolor{orange}{--------------------------------------------- \textit{Concatenation}}}
\For{each layer $\ell$}
    \State $P_{\text{cat}}^{(\ell)} \gets [\hat{P}_1^{(\ell)}, \hat{P}_2^{(\ell)}, \ldots, \hat{P}_T^{(\ell)}]$
    \State $A_{\text{cat}}^{(\ell)} \gets [\hat{A}_1^{(\ell)}; \hat{A}_2^{(\ell)}; \ldots; \hat{A}_T^{(\ell)}]$
    
    \Comment{\textcolor{blue}{----------------------------------- \textit{Inter-Task Scaling}}}
    \For{each task $t = 1$ to $T$}
        \State $\hat{A}_t^{(\ell)} \gets s_t^{(\ell)} \cdot \hat{A}_t^{(\ell)}$
    \EndFor
    
    \Comment{\textcolor{blue}{--------------------------- \textit{Inter-Dimension Scaling}}}
    \For{each column $j = 1$ to $d_{\text{in}}$}
        \State $a_j^{(\ell)} \gets c_j^{(\ell)} \cdot a_j^{(\ell)}$
    \EndFor
\EndFor

\noindent\Comment{\textcolor{orange}{----------------- \textit{Orthogonalization and Reconstruction}}}
\For{each layer $\ell$}
    \State Compute SVD: $P_{\text{cat}}^{(\ell)} = U_P^{(\ell)} \Sigma_P^{(\ell)} (V_P^{(\ell)})^\top$
    \State Compute SVD: $A_{\text{cat}}^{(\ell)} = U_A^{(\ell)} \Sigma_A^{(\ell)} (V_A^{(\ell)})^\top$
    \State Orthogonalize: $\tilde{P}^{(\ell)} \gets U_P^{(\ell)} (V_P^{(\ell)})^\top$, $\tilde{A}^{(\ell)} \gets U_A^{(\ell)} (V_A^{(\ell)})^\top$
    \State Reconstruct: $\Delta_{\text{merged}}^{(\ell)} \gets \tilde{P}^{(\ell)} \tilde{A}^{(\ell)}$
\EndFor

\noindent\Comment{\textcolor{blue}{--------------------------------------- \textit{Inter-Layer Scaling}}}
\For{each layer $\ell \in \mathcal{L}$}
    \State $\Delta_{\text{merged}}^{(\ell)} \gets \beta_\ell \cdot \Delta_{\text{merged}}^{(\ell)}$
\EndFor

\State Find optimal $\alpha^*$ on $\mathcal{D}_{\text{val}}$ that maximizes performance
\State $\theta_M \gets \theta_0 + \alpha^* \cdot \{\Delta_{\text{merged}}^{(\ell)}\}_{\ell=1}^L$

\State \Return $\theta_M$
\end{algorithmic}
\end{algorithm}

\subsection{Details on Polarized Scaling}
In the empirical evidence presented in Section \ref{sec:polarized_scaling}, individual or merged task matrices are loaded in different sequences. To isolate the phenomenon from the distinct characteristics of different layer types, our analysis focuses solely on the four principal layer types within the ViT: namely, the QKV projection and the Output projection in the attention module, along with the Up- and Down-projections in the MLP. Furthermore, when merging layers sequentially, we group the layers by these four layer types. The merging process then proceeds by cyclically selecting task matrices from each group in a fixed order. This design ensures that the observed effects are attributable to the merging strategy rather than to variations across layer types.

\paragraph{Empirical Evidence: Pairwise Task Interaction.}
We further analyze the pairwise influence between task matrices. As shown in Figure \ref{fig:task_interaction}, each column represents how the performance of two tasks changes when the task matrix of the column’s task is added in different orders to the fine-tuned model of the row’s task. The results show that adding task matrices in descending order of their norms yields a better Pareto optimal trade-off than adding them in ascending order. Introducing a large number of low-norm updates brings little improvement to the target task but significantly harms the performance of others. This highlights the importance of suppressing less critical or noisy updates while emphasizing the most essential ones in model merging.

\begin{figure}[t]
    \centering
    \includegraphics[width=\linewidth]{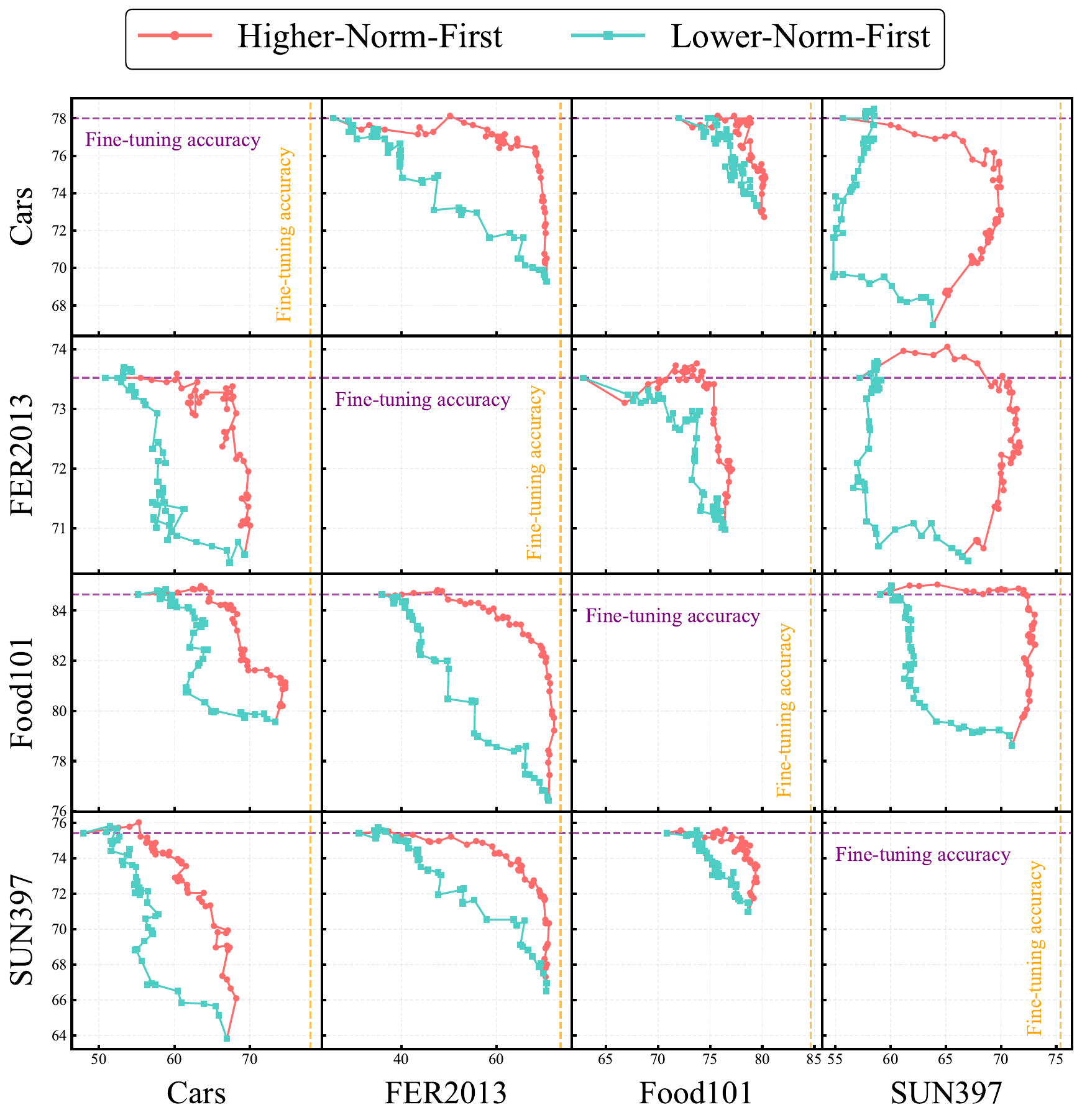}
    \caption{Illustration of task invasion: using the task matrices from one task to invade the fine-tuned model of another, performed layer-by-layer based on the norm order. Rows represent the invaded task, and columns represent the invading task.}
    \label{fig:task_interaction}
\end{figure}

\subsection{Merging Non-Matrix Parameters}
While most parameters in the ViT are 2D matrices merged using our proposed ESM within the Essential Subspace, the network also includes other parameter types. For non-matrix parameters such as bias vectors, layer normalization parameters, and the convolutional stem, we follow the standard practice in \cite{gargiulo2025task} and apply simple averaging.

\section{Experiment Details}

\subsection{OOD Performance}
We evaluate the out-of-domain (OOD) generalization of our model merging method and that of the baseline \cite{gargiulo2025task}, as presented in Table \ref{tab:ood_performance}. Based on the 8-task benchmark, we treat each task in turn as the test task, merge the fine-tuned models from the remaining seven tasks, and evaluate the merged model on the held-out task. The final OOD score is the average performance across all test tasks. The results show that our method not only achieves substantial improvements in conventional in-domain evaluations but also demonstrates superior generalization in the out-of-domain setting.

\begin{table}[t]
\caption{Performance on the 8-task benchmark. ``ID'': the average accuracy of the merged model across all tasks. ``OOD'': the result of merging fine-tuned models from 7 tasks and testing on the remaining unseen task. Each task is alternately used as the test task, and the reported OOD score is the average across all such settings.}
\label{tab:ood_performance}
\setlength{\tabcolsep}{5pt}
\begin{tabular}{lcccccc}
\toprule[1.5pt] 
\multirow{2}{*}{Method} & \multicolumn{2}{c}{ViT-B/32} & \multicolumn{2}{c}{ViT-B/16} & \multicolumn{2}{c}{ViT-L/14}
\\
\cmidrule[0.5pt](lr){2-3} \cmidrule[0.5pt](lr){4-5} \cmidrule[0.5pt](lr){6-7} & ID & OOD & ID & OOD & ID & OOD 
\\
\midrule[0.5pt] 

\rowcolor[HTML]{ECECEC}Baseline & 85.9 & 49.9 & 89.0 & 54.7 & 93.0 & 65.9 \\
\rowcolor[HTML]{C6E2FF}ESM (Ours) & \textbf{88.4} & \textbf{51.3} & \textbf{91.8} & \textbf{55.5} & \textbf{94.8} & \textbf{66.4}  \\

\bottomrule[1.5pt] 
\end{tabular}
\end{table}

\subsection{Selection of the Global Scaling Coefficient \texorpdfstring{$\boldsymbol{\alpha}$}{}}
We report the global scaling coefficient $\alpha$ selected on the validation set, as shown in Table \ref{tab:alpha_selection}. Based on the empirical ranges used in previous model merging studies \cite{gargiulo2025task, marczakno}, we set the search interval for $\alpha$ between 0.0 and 2.0 and perform binary search to determine the optimal value. The results show that the optimal $\alpha$ decreases as the number of tasks increases, likely because merging more tasks amplifies the norm of the combined updates. Consequently, a smaller scaling factor helps balance the output feature norm and achieves better validation performance.

\begin{table}[t]
\caption{Scaling coefficient $\alpha$ selected using the validation set. $T$ represents the total number of merged task-specific expert models.}
\label{tab:alpha_selection}
\setlength{\tabcolsep}{2.55pt}
\begin{tabular}{ccccccccc}
\toprule[1.5pt] 
\multicolumn{3}{c}{ViT-B/32} & \multicolumn{3}{c}{ViT-B/16} & \multicolumn{3}{c}{ViT-L/14}
\\
\cmidrule[0.5pt](lr){1-3} \cmidrule[0.5pt](lr){4-6} \cmidrule[0.5pt](lr){7-9} $T$=8 & $T$=14 & $T$=20 & $T$=8 & $T$=14 & $T$=20 & $T$=8 & $T$=14 & $T$=20
\\
\midrule[0.5pt] 

0.88 & 0.76 & 0.69 & 0.82 & 0.76 & 0.67 & 0.91 & 0.72 & 0.64 \\

\bottomrule[1.5pt] 
\end{tabular}
\end{table}

\subsection{Performance on Individual Tasks}
We present detailed model fusion results across multiple models and datasets, as shown in Figure \ref{fig:radar_comparison}. Our proposed ESM framework demonstrates consistent performance improvements across nearly all datasets and models.

\begin{figure*}[t]
    \centering

    \begin{subfigure}{\linewidth}
        \centering
        \includegraphics[width=\linewidth]{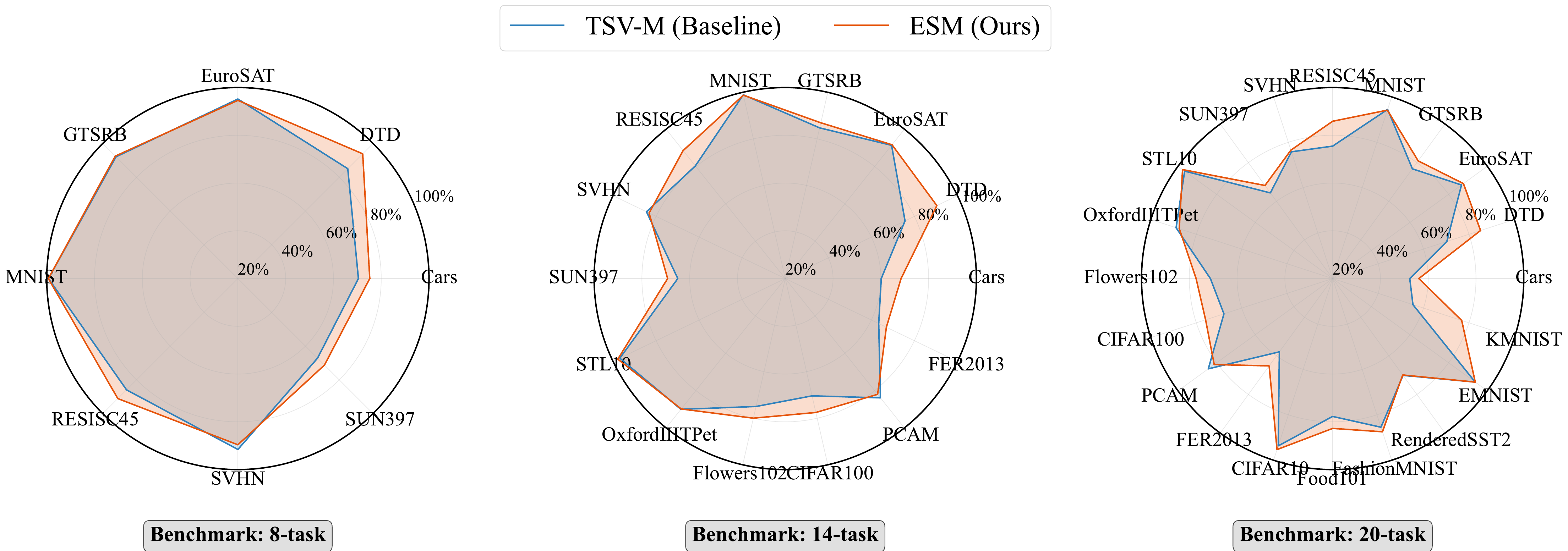}
        \caption{ViT-B/32}
    \end{subfigure}
    
    \vspace{5mm} 

    \begin{subfigure}{\linewidth}
        \centering
        \includegraphics[width=\linewidth]{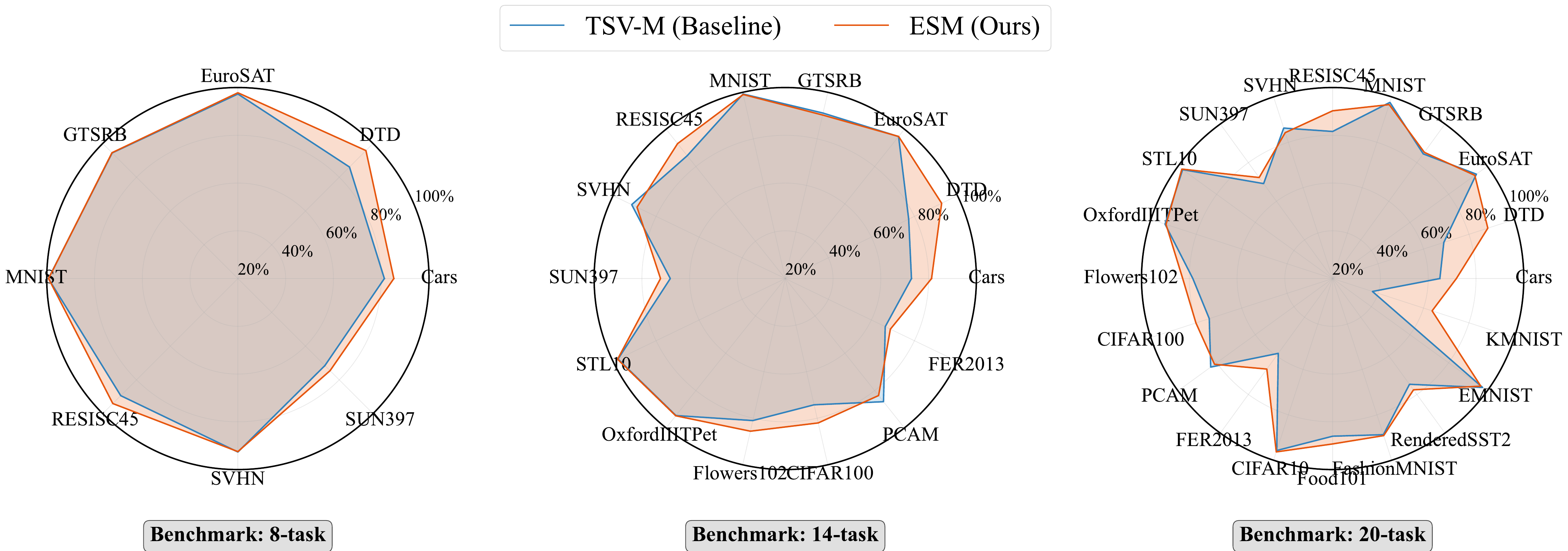}
        \caption{ViT-B/16}
    \end{subfigure}

    \vspace{5mm} 

    \begin{subfigure}{\linewidth}
        \centering
        \includegraphics[width=\linewidth]{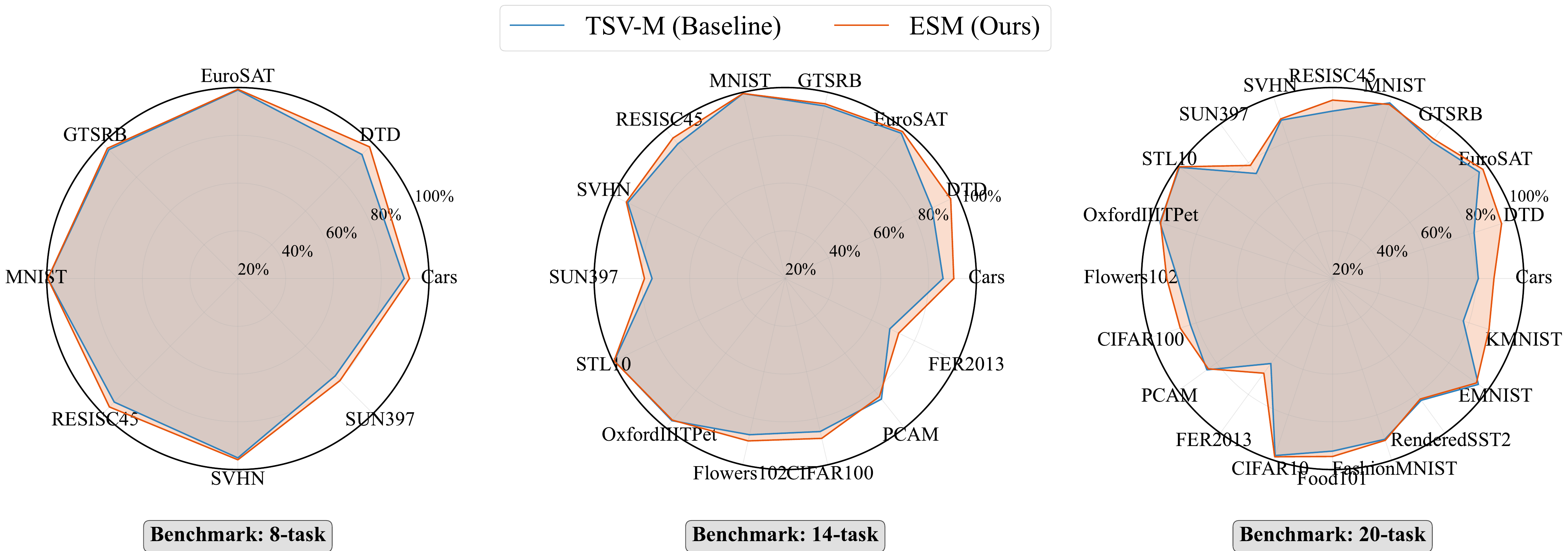}
        \caption{ViT-L/14}
    \end{subfigure}
    
    \caption{Performance comparison between the baseline method TSV-M \cite{gargiulo2025task} and our proposed ESM framework.}
    \label{fig:radar_comparison}
\end{figure*}

\begin{table*}[t]
\caption{Ablation study on the order of the three-level scaling in Polarized Scaling.}
\label{tab:ablation_scaling_order}
\setlength{\tabcolsep}{4pt}
\begin{tabular}{cccccccc}
\toprule[1.5pt] 
\multirow{2}{*}{Scaling Order} & \multicolumn{3}{c}{ViT-B/32} & \multicolumn{3}{c}{ViT-B/16}
\\
\cmidrule[0.5pt](lr){2-4} \cmidrule[0.5pt](lr){5-7} & 8 tasks & 14 tasks & 20 tasks & 8 tasks & 14 tasks & 20 tasks
\\
\midrule[0.5pt] 

\textbf{(1) Inter-Dimension, (2) Inter-Task}, (3) Inter-layer & 88.0$_{(94.1)}$ & 83.2$_{(91.4)}$ & 80.8$_{(88.3)}$ & 91.6$_{(96.7)}$ & 87.1$_{(93.7)}$ & 84.7$_{(90.9)}$ \\

\rowcolor[HTML]{C6E2FF}\textbf{(1) Inter-Task, (2) Inter-Dimension}, (3) Inter-layer & 88.4$_{(95.3)}$ & 83.7$_{(92.0)}$ & 81.3$_{(88.9)}$ & 91.8$_{(97.0)}$ & 87.4$_{(94.1)}$ & 84.9$_{(91.1)}$ \\





\bottomrule[1.5pt] 
\end{tabular}
\end{table*}

\subsection{Order of Scaling Across Tasks, Dimensions, and Layers}
As shown in Algorithm \ref{alg:esm}, the default order of Polarized Scaling in our experiments is: first across tasks, then across dimensions, and finally across layers. Since inter-layer scaling is designed to emphasize task consensus, it is applied by default to the final fused task matrices. The order of inter-task and inter-dimension scaling, however, can be interchanged. Therefore, we ablate the sequence of these two operations in Table \ref{tab:ablation_scaling_order}. The results indicate that our default order, which applies inter-task scaling first followed by inter-dimension scaling, yields better performance.

\subsection{Ablation Study on Rank \texorpdfstring{$\boldsymbol{k}$}{} Selection}
In our method and experiments, the default setting uses $k = \lfloor d_{\text{out}} / T \rfloor$ as the rank budget for low-rank decomposition of each task matrix, where $T$ denotes the number of tasks and $d_{\text{out}}$ represents the original output dimension. We conduct an ablation study on the selection of rank $k$, as shown in Figure \ref{fig:ablation_rank_k}. The results demonstrate that the merged model exhibits robustness to the choice of rank $k$, maintaining comparable performance across a wide range of values ($\lfloor 0.5\cdot d_{\text{out}} / T \rfloor \sim \lfloor 2.0\cdot d_{\text{out}} / T \rfloor$).

\begin{figure}[t]
    \centering
    \begin{subfigure}{0.49\linewidth}
        \centering
        \includegraphics[width=\linewidth]{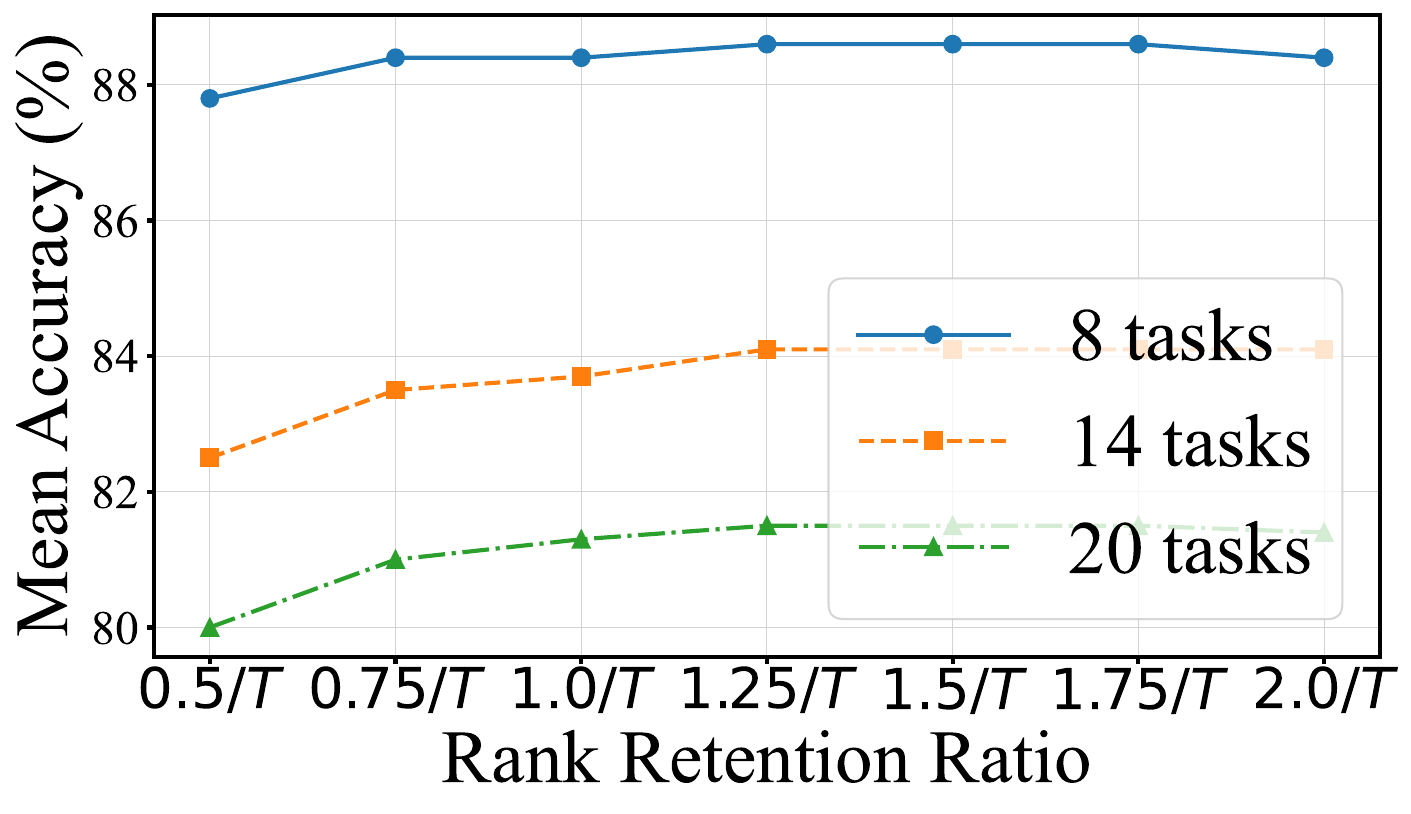}
        \caption{ViT-B/32}
    \end{subfigure}
    \hfill
    \begin{subfigure}{0.49\linewidth}
        \centering
        \includegraphics[width=\linewidth]{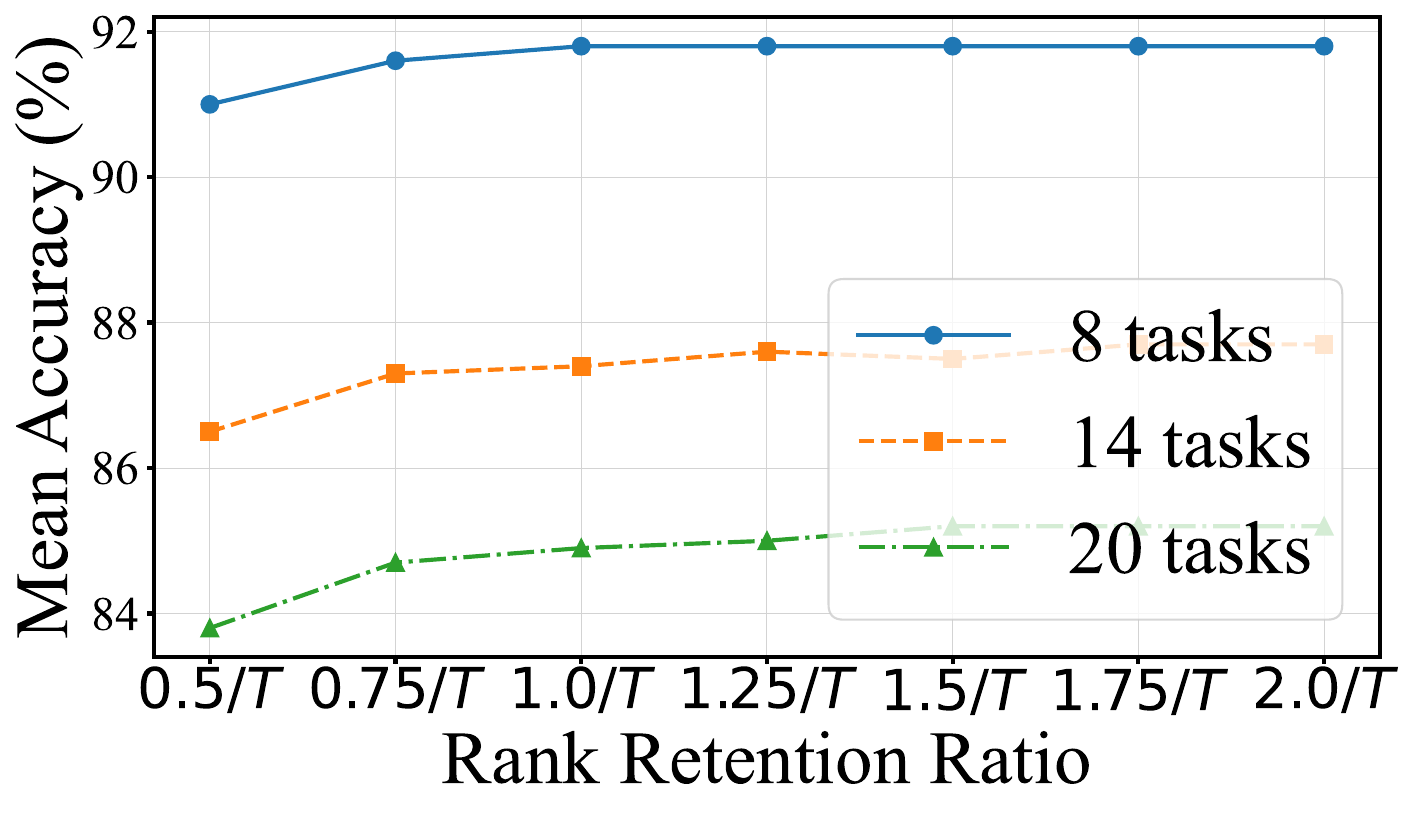}
        \caption{ViT-B/16}
    \end{subfigure}
    \caption{Ablation study on the impact of rank retention ratio on merged model performance. $T$ denotes the number of tasks.}
    \label{fig:ablation_rank_k}
\end{figure}

\subsection{Ablation Study on the Exponent of the Scaling Factor}
The default configuration of our method employs a power of $2$ in the polarized scaling coefficient, i.e., $(\frac{\text{norm}}{\mathbb{E}[\text{norm}]})^2$. The rationale for this choice is to amplify significant parameters while suppressing redundant ones. To validate the sensitivity of our approach to this hyperparameter, we conducted an ablation study. As shown in Figure \ref{fig:ablation_scaling_power}, the results indicate that model merging is robust across a range of exponents. The value of $2$ was chosen as the default because it achieves optimal performance.

\begin{figure}[t]
    \centering
    \begin{subfigure}{0.49\linewidth}
        \centering
        \includegraphics[width=\linewidth]{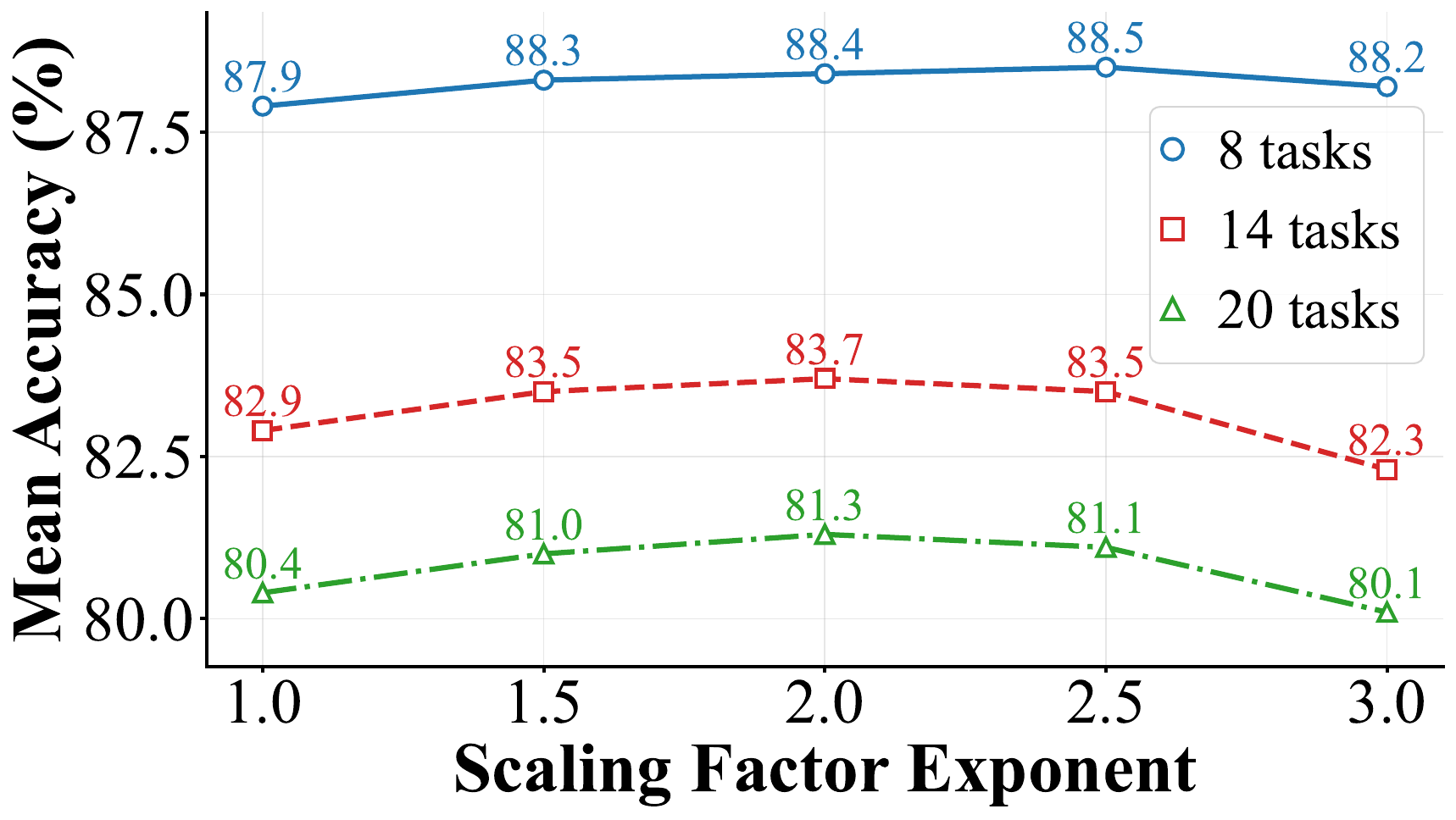}
        \caption{ViT-B/32}
    \end{subfigure}
    \hfill
    \begin{subfigure}{0.49\linewidth}
        \centering
        \includegraphics[width=\linewidth]{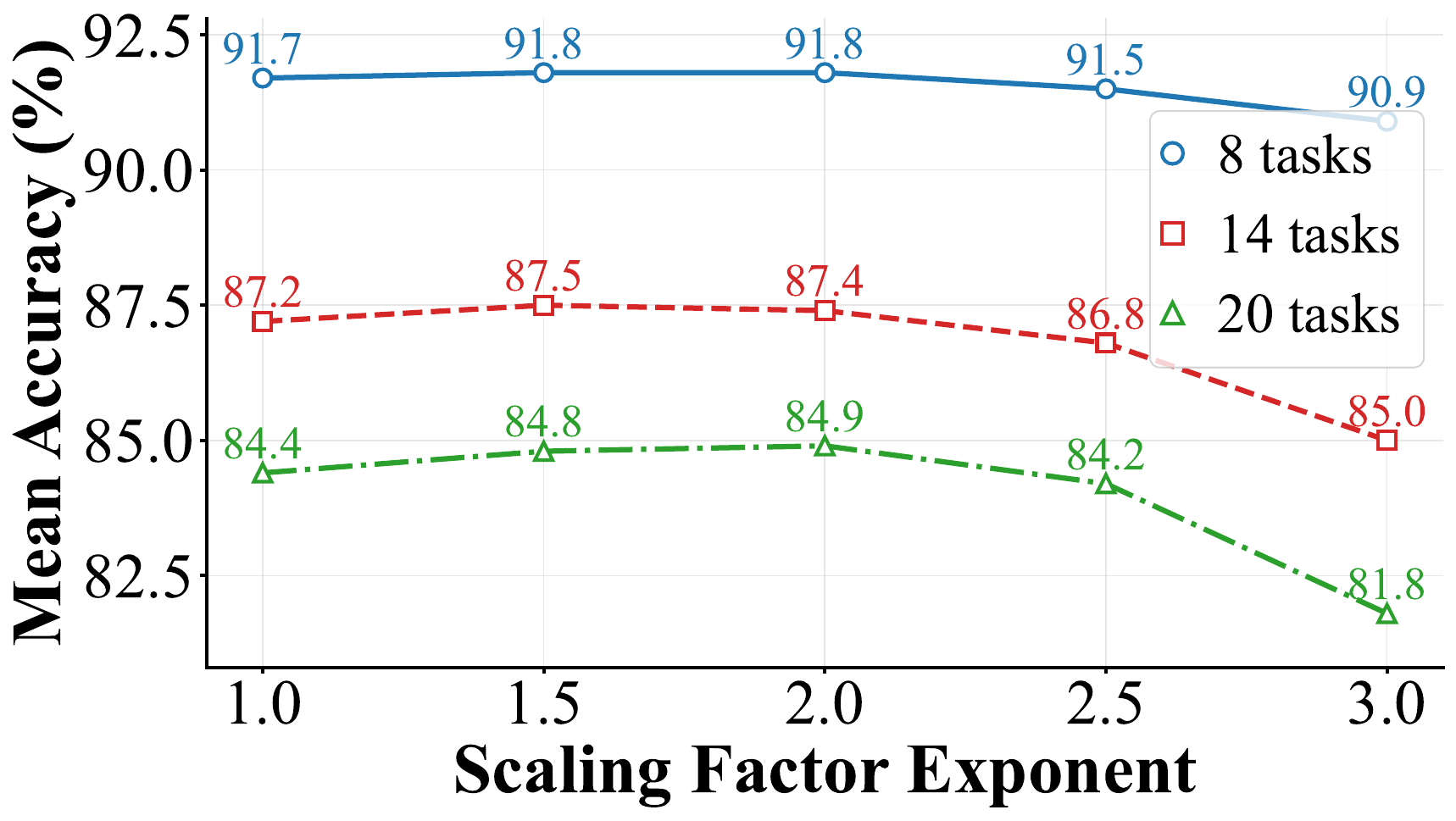}
        \caption{ViT-B/16}
    \end{subfigure}
    \caption{Impact of the scaling factor exponent on merged model performance.}
    \label{fig:ablation_scaling_power}
\end{figure}

\subsection{Calculation of Energy Retention}
\label{sec:energy_retention}
For the SVD-based method, energy retention is calculated as the ratio of the sum of squares of the retained singular values to the sum of squares of all singular values. For our ESD method, it is defined as the ratio of the sum of the retained eigenvalues to the sum of all eigenvalues. This is because the square of a singular value and an eigenvalue both correspond to the explained variance.